\documentclass[11pt]{article}

\usepackage[margin=1in]{geometry}

\usepackage[utf8]{inputenc}
\usepackage[english]{babel}
\usepackage{amsmath,amssymb,epsfig,amsthm}
\usepackage{graphicx}
\usepackage{comment}
\usepackage{array}
\usepackage{algorithm}
\usepackage{enumerate}
\usepackage{xcolor}
\usepackage{algpseudocode}
\usepackage{xurl}

\newtheorem{theorem}{Theorem}

\newtheorem{lemma}{Lemma}

\newtheorem{proposition}{Proposition}
\newtheorem{remark}{Remark}

\numberwithin{equation}{section}

\DeclareMathOperator{\Var}{Var}
\DeclareMathOperator{\Ima}{Im}
\DeclareMathOperator{\maskkpod}{\Omega_U}

\newcommand{\calB}{\ensuremath{\mathcal{B}}}
\newcommand{\calC}{\ensuremath{\mathcal{C}}}

\newcommand{\calG}{\ensuremath{\mathcal{G}}}

\newcommand{\calS}{\ensuremath{\mathcal{S}}}
\newcommand{\calM}{\ensuremath{\mathcal{M}}}
\newcommand{\calN}{\ensuremath{\mathcal{N}}}
\newcommand{\calT}{\ensuremath{\mathcal{T}}}

\newcommand{\calK}{\ensuremath{\mathcal{K}}}


\newcommand{\norm}[1]{\left\|{#1}\right\|}
\newcommand{\abs}[1]{\left|{#1}\right|}

\newcommand{\set}[1]{\left\{{#1}\right\}}

\newcommand{\expec}{\ensuremath{\mathbb{E}}}
\newcommand{\matR}{\ensuremath{\mathbb{R}}}

\newcommand{\argmax}[1]{\underset{#1}{\operatorname{argmax}}}
\newcommand{\argmin}[1]{\underset{#1}{\operatorname{argmin}}}

\newcommand{\prob}{\ensuremath{\mathbb{P}}}


\newcommand{\gb}[1]{\textcolor{black}{#1}}

\newcommand{\indic}{\ensuremath{\mathbf{1}}} 


%
\newcommand{\kpod}{\texttt{k-pod}}
\newcommand{\sumadjz}{\texttt{sumAdj0}}
\newcommand{\sumadji}{\texttt{sumAdjIter}}
\newcommand{\olmfm}{\texttt{OLMFm}}
\newcommand{\lapla}{\texttt{Laplacian}}
\newcommand{\aggrk}{\texttt{AggrKern}}


%
%


\usepackage[round]{natbib}
\bibliographystyle{abbrvnat}


\begin{document}

%

%

\title{Clustering multilayer graphs with missing nodes}

\author{ Guillaume Braun\footnotemark[1]\\ \texttt{guillaume.braun@inria.fr }  \and Hemant Tyagi\footnotemark[1]\\ \texttt{hemant.tyagi@inria.fr} \and Christophe Biernacki\footnotemark[1]\\ \texttt{christophe.biernacki@inria.fr}   }

\renewcommand{\thefootnote}{\fnsymbol{footnote}}
\footnotetext[1]{Inria, Univ. Lille, CNRS, UMR 8524 - Laboratoire Paul Painlev\'{e}, F-59000 }

\renewcommand{\thefootnote}{\arabic{footnote}}

\maketitle

\begin{abstract}Relationship between agents can be conveniently represented by graphs. When these relationships have different modalities, they are better modelled by multilayer graphs where each layer is associated with one modality. Such graphs arise naturally in many contexts including biological and social networks. Clustering is a fundamental problem in network analysis where the goal is to regroup nodes with similar connectivity profiles. In the past decade, various clustering methods have been extended from the unilayer setting to multilayer graphs in order to incorporate the information provided by each layer. While most existing works assume – rather restrictively - that all layers share the same set of nodes, we propose a new framework that allows for layers to be defined on different sets of nodes. In particular, the nodes not recorded in a layer are treated as missing. Within this paradigm, we investigate several generalizations of well-known clustering methods in the complete setting to the incomplete one and prove some 
consistency results under the Multi-Layer Stochastic Block Model assumption. Our theoretical results are complemented by thorough numerical comparisons between our proposed algorithms on synthetic data, and also on real datasets, thus highlighting the promising behaviour of our methods in various settings. 
\end{abstract}

\section{Introduction}

Graphs are a powerful tool to represent relationships between agents. Due to applications in a wide array of fields including biology, sociology, ecology and economics (see for e.g., \cite{brain, airoldi2014,kivela_eco, Kim2015}), the analysis of networks has received significant interest over the last two decades. One fundamental problem of network analysis is \emph{clustering} which involves detecting communities by regrouping nodes having similar connectivity properties. Numerous clustering algorithms have been developed over the years based on different approaches such as modularity maximization, maximum likelihood, random walks, semi-definite programming and spectral clustering (see for instance the survey articles by \cite{fortunato2010} and \cite{AbbeSBM}).


 Often, relationships are better understood through different modalities. These multiple aspects of relationships can be represented by a multilayer graph where each layer is a graph representing the interactions between agents for one modality. For e.g., social interaction between a set of people can be recorded via email exchanges, phone calls, professional links, and so on. Each level of interaction can be encoded into a simple graph and the collection of these graphs leads to a multilayer representation. Another important example of a multilayer graph is given by a  time-varying network where each view of the network at a given time corresponds to a different layer.

 Over the last decade, many methods have been proposed for clustering multilayer graphs such as those based on matrix factorization, spectral methods, maximisation of a modularity function or probability model-based approaches; see \cite{Kim2015} for a survey. Consistency results for the recovery of the partition under a stochastic generative model have also been shown for some algorithms, see for example \cite{paul2020}, \cite{pensky2019}, \cite{lei2020tail} and \cite{Bhattacharyya2018SpectralCF}. 

Most existing approaches assume that all the layers share the same set of nodes. In practice, however, data are often incomplete; in particular, the set of observed nodes can clearly vary across layers. For example, in social networks evolving over time, the set of nodes can change due to people leaving/joining the network. This is the setting considered in the present paper.

\subsection{Related work}


\paragraph{Clustering on multi-layer graphs. } As noted by \cite{paul2020}, clustering strategies for multilayer graphs can be roughly categorized into three groups: {\bf early fusion} methods where all views are aggregated and then clustering is performed, {\bf intermediate fusion} methods where the algorithm finds a factor common to all the views, and {\bf final aggregation} methods where each individual view is processed separately and a consensus partition is formed.  
In the complete setting, different algorithms have been proven to be consistent under a multilayer stochastic block model assumption (see Section \ref{subsec:mlsbm}). Among them are spectral clustering on the sum of adjacency matrices (e.g., \cite{Bhattacharyya2018SpectralCF,paul2020}) or on the sum of squared adjacency matrices with bias correction (e.g., \cite{lei2020tail,bhattacharyya2020general}), orthogonal linked matrix factorization (e.g., \cite{paul2020}), and co-regularized spectral clustering (e.g., \cite{paul2020}). Existing misclustering bounds for these methods are gathered in the supplementary material. 
\gb{
\paragraph{Incomplete Multi-View Clustering (IMVC).}Recently a similar problem has been addressed in the  context of IMVC, see for example \cite{imvc}, \cite{hu2019onepass} and references therein. To the best of our knowledge, no consistency results for the recovery of the ground truth clustering structure are shown in this line of work. Algorithms designed for the IMVC framework cannot be directly applied to our setting since they apply to a collection of feature vectors. However they could possibly be adapted, in a non trivial manner, to our framework. For example, in the complete setting, the OMVC method proposed by \cite{hu2019onepass} can be considered as a variant of the OLMF estimator proposed by \cite{paul2020}  where the optimization problem is modified in order to take into account the symmetry of the inputs. Similarly, if there were no missing views, the algorithm proposed by \cite{imvc} resembles a variant of the co-regularized spectral clustering method of \cite{paul2020} for clustering multilayer graphs. We leave the adaptation of the algorithm proposed by \cite{imvc} to our setting for future work. }



\subsection{Contributions} 
We consider the problem of clustering multilayer graphs with missing nodes under a Multi-Layer Stochastic Block Model (MLSBM)  described in Section \ref{sec:probsetup}. Our contributions are as follows.
\begin{itemize}
    \item  In Section \ref{subsec:kpod_fin_agg} we propose a final aggregation method based on a variant of $k$-means for incomplete data (Algorithm \ref{alg:k-pod}), and derive a  bound for the misclustering rate.
    \item  Section \ref{sec:early_fusion}  extends a popular early fusion method -- based on spectral clustering applied to the sum of adjacency matrices -- to the missing nodes setting. Section \ref{subsec:early_fusion_impute_zero} studies this by imputing the missing entries with zeros (Algorithm \ref{alg:sum_of_adj0}), and contains an upper bound for the misclutering rate. Section \ref{subsec:early_fusion_impute_iter} proposes an alternative method (Algorithm \ref{alg:sum_of_adj_c}) wherein the missing entries are imputed iteratively. This method is shown to perform well in our experiments.
    \item  Section  \ref{subsec:olmf_missing} proposes an extension of an intermediate fusion method -- namely the Orthogonal Linked Matrix Factorization (OLMF) method studied by \cite{paul2020} -- to the missing nodes setting.
    \item In Section \ref{sec:num_exps} we empirically evaluate our algorithms on synthetic data, and also on real datasets.
\end{itemize}
%


\subsection{Notations}
The set of integers $\lbrace 1, \ldots, n\rbrace$ will be denoted by $[n]$. For a matrix $M\in \mathbb{R}^{n\times n}$, its Frobenius (resp. operator) norm is denoted by $||M||_F$ (resp.  $||M||$). The notation $M_{i*}$ (resp. $M_{*j}$)  denotes the $i$-th row (resp. $j$-th column) of $M$. For any subset $J$ of $[n]$ and symmetric matrix $M\in \mathbb{R}^{n\times n}$, $M_J \in \mathbb{R}^{|J|\times |J|}$  denotes the square submatrix of $M$ obtained by deleting rows and columns whose index doesn't belong to $J$. For a non symmetric matrix $Z\in \mathbb{R}^{n\times K}$, $Z_{J}$ denotes the submatrix of $Z$ obtained by deleting rows whose index doesn't belong to $J$.
Sometimes, it will also be convenient to consider $A_J$ (resp. $Z_J$) as a $n\times n$ (resp. $n\times K$) matrix where the rows and columns (resp. only the rows) whose index doesn't belong to $J$ are filled with zeros; this will be clear from the context.
$I_n$ denotes the identity matrix of size $n$. Constants will be denoted by the letters $c$ and $C$, eventually indexed by a number to avoid confusion. Within proofs the values of constants can change from line to line whereas they are denoted with the same letter for simplicity.
\section{Problem setup} \label{sec:probsetup}
A  multilayer graph is a sequence of graphs $\calG = (\calG ^{(1)}, \ldots , \calG ^{(L)})$. If all the graphs are defined on the same set of nodes $\calN $ indexed by $[n]$, then $\calG$ is said to be pillar. Throughout, we will assume that for all $l\leq L$ each graph $\calG ^{(l)}$ is undirected and has no self-loop. This implies that its associated adjacency matrix $A^{(l)} \in \set{0,1}^{n \times n}$ is symmetric with $A^{(l)}_{ii}=0$ for all $i$. 

Given $\calG$ as input, our goal is to recover a partition of $\calN$ into $K$ disjoint sets (or communities), so that nodes belonging to the same community share a similar connectivity profile. To make the setup more precise, we will study this problem in the setting where $\calG$ is generated via an underlying (unknown) stochastic model, with a latent community structure. This model is a common extension of the well-studied stochastic block model (SBM) for the unilayer case which we now describe.

\subsection{Stochastic Block Model (SBM)}
The stochastic block model (SBM) -- first proposed in \cite{HOLLAND1983109} -- is a simple yet popular stochastic generative model for unilayer graphs which captures the community structures of networks often observed in the real world. 
A SBM with the set of nodes $\calN$ and $K$ communities $\calC_1, \ldots ,\calC_K$ forming a partition of $\calN$ is parameterized as follows. \begin{itemize}
    \item There is a membership matrix $Z \in \calM_{n,K}$ where $\calM_{n,K}$ denotes the class of membership matrices. Here, $Z_{ik}=1$ if node $i$ belongs to $\calC_k$, $0$ otherwise. Each membership matrix $Z$ can be associated bijectively with a function $z:[n]\to [K]$ such that $z(i)=k$ where $k$ is the unique column index satisfying $Z_{ik}=1$.
    \item There is a full-rank, symmetric, connectivity matrix of probabilities $$\Pi=(\pi_{k k'})_{k, k' \in [K]} \in [0,1]^{K \times K}.$$
\end{itemize}
Let us denote $P=(p_{ij})_{i,j \in [n] }:=Z\Pi Z^T$. A graph $\calG$ is distributed according to a stochastic block model SBM$(Z, \Pi)$ if the corresponding symmetric adjacency matrix $A$  has zero diagonal entries and 
\[ 
A_{ij}  \overset{\text{ind.}}{\sim} \mathcal{B}(p_{ij}), \quad 1 \leq i < j \leq n,
\] 
where $\calB(p)$ denotes a Bernoulli distribution with parameter $p$. Hence the probability that two nodes are connected depends only on the community memberships of these two nodes. 

Let us denote by $n_k$ the size of the community $\calC_k$, $n_{min}$ (resp. $n_{max}$) to be the size of the smallest (resp. largest) community, and $\beta = \frac{n_{max}}{n_{min}}$. The communities are said to be balanced if they all have the same size (equivalently, $\beta=1$). The communities are approximately balanced if $\beta=O(1)$. The maximum value of the connectivity parameter is denoted by $p_{max} := \max_{i,j} p_{ij}$ and can be interpreted as the sparsity level (depending on $n$).

The misclustering rate associated to an estimated membership matrix $\hat{Z}$ is measured by \[ r(\hat{Z},Z)=r(\hat{z},z)=\frac{1}{n}\min _{\sigma \in \mathfrak{S}}\sum_i \indic_{\lbrace \hat{z}(i)\neq \sigma(z(i))\rbrace},\] where $\mathfrak{S}$ denotes the set of permutations on $[K]$. A clustering algorithm is said to be strongly consistent -- or achieving exact recovery -- if $r(\hat{Z},Z)=0$ with probability $1-o(1)$ as $n$ tends to infinity. It is said to be weakly consistent -- or achieving almost exact recovery -- if $\prob(r(\hat{Z},Z)=o(1))=1-o(1)$ as $n$ tends to infinity. A more complete overview of the different types of consistency and the sparsity regimes where they occur can be found in \cite{AbbeSBM}.

\subsection{Multilayer Stochastic Block Model (MLSBM)} \label{subsec:mlsbm}
We now describe the multilayer stochastic block model (MLSBM), which is a common extension of the SBM to the setting of multilayer graphs (see for e.g.,  \cite{paul2020,Bhattacharyya2018SpectralCF, jinLei2019}). The MLSBM is parametrized by the number of layers $L$, a common block membership matrix $Z\in \calM_{n,K}$, and connectivity matrices $\Pi^{(1)}, \ldots, \Pi^{(L)} \in [0,1]^{K \times K}$. 

\paragraph*{}Similar to the unilayer case, let us denote $P^{(l)}=Z\Pi^{(l)}Z^T$ for $l=1,\ldots, L$. A multilayer graph $\calG$ is distributed according to the model MLSBM$(Z, \Pi^{(1)}, \ldots, \Pi^{(L)})$ if the adjacency matrix $A^{(l)}$ of each layer is distributed according to a SBM$(Z,\Pi^{(l)})$ for $l=1,\ldots, L$. Hence, while the probability that two nodes are connected can vary across layers, the block membership of each node remains unchanged. As in the unilayer case we can define the quantities 
%
$p_{max}^{(l)}=\max_{i,j}p_{ij}^{(l)}, \quad  p_{max}=\max_l p_{max}^{(l)}.$
%
%
%
\subsection{Missing nodes} \label{subsec:missing_node_MLSBM}
The assumption that all the layers share the same set of nodes is quite restrictive since real world multilayer networks are often `non-pillar'. 
We propose to deal with such networks by considering nodes present in some layers but not in others as missing.
Let $w_i^{(l)}$ be a binary variable that records the presence of node $i$ in the layer $l$ where $w_i^{(l)}=1$ if node $i$ is observed in layer $l$ and $0$ otherwise. Denoting $w^{(l)} = (w^{(l)}_1, \ldots ,w^{(l)}_n)^T$, let $\Omega^{(l)}=w^{(l)}(w^{(l)})^T$ be the mask matrices and $\Tilde{A}^{(l)}=A^{(l)}\odot \Omega^{(l)}$ for $l\leq L$ where $\odot$ is the usual Hadamard product. Let $J_l$ denote the set of non-missing nodes in  layer $l$ with $n_{J_l} = \abs{J_l}$.
By a slight abuse of notation we will denote by $A_{J_l}$ the matrix $A_{J_l}^{(l)}$. The number of observed nodes in $\calC_k$ will also be denoted by $n_{J_l, k}$. 
%
Throughout, we assume that the missing nodes are generated as $w_i^{(l)}\overset{\text{ind.}}{\sim} \mathcal{B}(\rho)$ for $i=1,\ldots, n.$

\section{Final aggregation methods} \label{sec:fin_agg_methods}
A natural way to extend unilayer graph clustering to the multilayer setting is to analyze each layer separately and then find a consensus partition -- such approaches are referred to as final aggregation methods. For example, one can apply any clustering method on each individual layer, take one layer's labels as a reference,  find for each remaining layer the permutation of its labels that maximizes the agreement with the reference layer, and then define a consensus community by majority voting as discussed in \cite{airoldi2014}. 
There exist alternative ways to avoid the cumbersome issue of label switching ambiguity such as the `aggregate spectral kernel' considered in \cite{paul2020}. 
Such methods rely on the quality of each individual layer and are often empirically outperformed by other methods as shown in \cite{paul2020,airoldi2014}. 

Final aggregation methods are still relevant in the missing nodes context. Indeed, if we have exact recovery for each layer, and if for all $k$ there is at least one common node between two layers belonging to $\calC_k$, then we can easily reconstruct the whole partition even when the set of common nodes is very small. Hence such methods can be considered as baseline methods.

\subsection{A method based on a variant of $k$-means for incomplete data} \label{subsec:kpod_fin_agg}
We now propose a final aggregation method for clustering multilayer graphs in the incomplete setting; it avoids the aforementioned label switching problem. 

For each layer $l$, we can compute the matrix $\hat{U}_{J_l}$ of size $|J_l|\times K$ corresponding to the eigenvectors associated with the top $K$ eigenvalues (in absolute value) of $A_{J_l} \in \mathbb{R}^{\abs{J_l} \times \abs{J_l}}$. 
The matrix $\hat{U}_{J_l}$ can be transformed to a matrix $\hat{U}^{(l)}$ of size $n\times K$ by completing with 0 
the rows of the nodes that haven't been observed\footnote{It is easy to verify that $\hat{U}^{(l)}$ is also the eigenvector matrix corresponding to the top $K$ eigenvalues (in absolute value) of $A^{(l)}\odot \Omega^{(l)}$.}. Let $\hat{U}$ be the $n\times KL$ matrix  obtained by stacking $\hat{U}^{(l)}$. 

Analogously, let $U_{J_l}$ be the matrix formed by the $K$  eigenvectors corresponding to non-zero eigenvalues of $Z_{J_l}\Pi^{(l)}Z_{J_l}^T$, $U^{(l)}$ be the $n\times K$ matrix obtained from $U_{J_l}$ by filling the rows corresponding to unobserved nodes with the row corresponding to an observed node (belonging to the same community), and
$U$ be the matrix obtained by stacking all the matrices $U^{(l)}$. For each $l$, let $O_l$ be a $K \times K$ orthogonal matrix such that
\[ O_l \in \argmin{O^T O=I_k}||\hat{U}_{J_l}-U_{J_l}O ||_F. \]
As in the unilayer setting, $k$-means could be applied on the rows of $\hat{U}^{(l)}$ in order to recover the community structure for each $l$. But in order to avoid the label switching problem we propose to apply on the rows of $\hat{U}$ a variant of $k$-means described in \cite{kpod} that can handle missing values, see Algorithm \ref{alg:k-pod}. 

Let us describe the principle behind this algorithm. The classical $k$-means problem seeks a partition $Z$ and centroid values (encoded in the matrix $C$) that solves \[ \min_{\substack{Z\in \mathcal{M}_{n,K} \\ C\in \mathbb{R}^{K\times KL}}} ||\hat{U}-ZM ||_F^2 .\] When there are missing values one can instead solve
\begin{equation}\label{final:eq1}
    \min_{\substack{Z\in \mathcal{M}_{n,K} \\ C\in \mathbb{R}^{K\times KL}}} ||(\hat{U}-ZM)\odot \maskkpod ||_F^2
\end{equation}   
where $\maskkpod = (w^{(1)}\otimes \textbf{1}_K \ \cdots \  w^{(L)}\otimes \textbf{1}_K)$ is the $n\times KL$ mask matrix with
$\textbf{1}_K \in \mathbb{R}^{1 \times K}$ denoting the all ones vector. It is a matrix composed of $L$ blocks where the rows of each block are $1$ if the corresponding node is observed and $0$ otherwise.

\begin{algorithm}[!ht]
\caption{$k$-pod clustering}\label{alg:k-pod}
\begin{flushleft}
        \textbf{Input:} The number of communities $K$, the sets $J_l$ and the adjacency matrices $A_{J_l}$.
\end{flushleft}
\begin{algorithmic}[1]
\State Form $\hat{U}^{(l)}$ from $A_{J_l}$ as explained at the beginning of Section 3.1.
\State Form the matrix $\hat{U}$ by stacking the matrices $\hat{U}^{(l)}$. 
\State Initialize the partition $\hat{Z}$ and the centroid matrix $\hat{M}$.

\Repeat
\State  Replace $\hat{U}$ by $\hat{U}\odot \maskkpod + (\hat{Z}\hat{M})\odot (\mathbf{1}\mathbf{1}^T-\maskkpod)$.

\State  Apply $K$-means on the complete matrix $\hat{U}$ and update $\hat{M}$ and $\hat{Z}$.
\Until  convergence.
\end{algorithmic}
 \textbf{Output:} A partition of the nodes $\calN =\cup_{i=1}^K \calC_i$ based on $\hat{Z}$.
\end{algorithm}
\gb{In the worst case, the complexity of the algorithm is $O((L+K)n^2)$. But in practice the layers are often sparse and so the complexity will be much less\footnote{This remark regarding the complexity applies to our other methods as well.}.}
%
%
%
%
\gb{
\begin{theorem}
\label{thm:kpod}
Consider the missing nodes MLSBM in Section \ref{subsec:missing_node_MLSBM}, and suppose that $\rho L\geq 1$, $K L\leq C_0n$, $\rho n_{min}\geq C_1 K^2 \max(\log^2 n, \sqrt{np_{max}})$ and $np_{max}^{(l)}\geq C_2\rho ^{-1}\log n$. 
Let $\lambda_K^{(l)}$ be the $K$-th largest singular value of $\Pi^{(l)}$ and recall that $\beta=n_{max}/n_{min}$. 
If \[ \frac{1}{\rho Ln}\sum_l \frac{p_{max}^{(l)}}{(\lambda_K^{(l)})^2} < (30C_3 \beta^4K^3)^{-1}\] then with probability at least $1-O(n^{-1})$, it holds that the solution $\hat{Z} \in \mathcal{M}_{n,K}$ of \eqref{final:eq1} satisfies \[ r(\hat{Z},Z) \leq C_4\exp(-c'\rho L)+\frac{C_5\beta^3K^2}{\rho Ln}\sum_l\frac{p_{max}^{(l)}}{(\lambda_K^{(l)})^2}.\]
\end{theorem}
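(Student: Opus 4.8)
The plan is to follow the standard spectral-perturbation route for $k$-means, but applied to the stacked matrix $\hat U \in \matR^{n \times KL}$ whose rows carry an aggregated signal, and to control the extra randomness coming from the missing-node mask. First I would establish a population counterpart: because $U^{(l)}$ fills unobserved rows with the row of an observed node in the same community, the rows of the population stacked matrix $U$ take only $K$ distinct values, one per community, and two distinct community-rows are separated by a gap controlled from below by $n_{\min}$ (via the usual bound $\norm{U^{(l)}_{i*} - U^{(l)}_{j*}}^2 \gtrsim 1/n_{\max}$ for $z(i)\neq z(j)$, summed over the $\Theta(\rho L)$ layers where the relevant nodes are observed, which is where the assumption $\rho L \geq 1$ and the concentration of $\sum_l w_i^{(l)}$ enter). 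This yields a separation of order $\rho L / n_{\max}$ with probability $1 - O(n^{-1})$, after a Chernoff bound on the per-node observation counts; the term $C_4\exp(-c'\rho L)$ in the conclusion is exactly the failure probability / bad-node contribution from nodes whose observation count deviates too far from $\rho L$.

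Next I would bound $\norm{\hat U - U O}_F$ for a suitable block-diagonal orthogonal $O = \diag(O_1,\dots,O_L)$. Layer by layer, $\hat U_{J_l}$ and $U_{J_l}$ are the top-$K$ eigenvector matrices of $A_{J_l}$ and $Z_{J_l}\Pi^{(l)} Z_{J_l}^T$ respectively, so a Davis–Kahan / Yu–Wang–Samworth argument gives $\norm{\hat U_{J_l} - U_{J_l} O_l}_F \lesssim \sqrt{K}\,\norm{A_{J_l} - Z_{J_l}\Pi^{(l)}Z_{J_l}^T}/\sigma_K(Z_{J_l}\Pi^{(l)}Z_{J_l}^T)$. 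The denominator is $\gtrsim \rho n_{\min}\lambda_K^{(l)}/\beta$ once we argue, again via Chernoff, that each observed community $\calC_k$ retains $\Theta(\rho n_k)$ nodes (this uses $\rho n_{\min} \gtrsim K^2\max(\log^2 n, \sqrt{np_{\max}})$). The numerator is $\lesssim \sqrt{n p_{\max}^{(l)}}$ by the standard concentration of the centered adjacency matrix of a sparse SBM — here the condition $n p_{\max}^{(l)} \geq C_2\rho^{-1}\log n$ guarantees that $A_{J_l}$, which is an SBM on $\Theta(\rho n)$ nodes with the same connectivity matrix, is dense enough for the Bandeira–van Handel / Lei–Rinaldo bound to hold with the right probability. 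Squaring and summing over $l$ gives $\norm{\hat U - UO}_F^2 \lesssim \frac{\beta^2 K}{\rho^2 n^2}\sum_l \frac{n p_{\max}^{(l)}}{(\lambda_K^{(l)})^2}$.

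Then I would invoke the usual $k$-means-on-noisy-rows lemma: if $\hat Z, \hat M$ solve \eqref{final:eq1} — equivalently the masked least-squares problem — then the number of misclustered nodes is at most $C\,\norm{\hat U - UO}_F^2 / \delta^2$, where $\delta^2$ is the squared row-separation established in the first step. One subtlety is that \eqref{final:eq1} is the \emph{masked} objective, not the plain $k$-means objective on $\hat U$; but on the set of good nodes the mask just restricts each row to its $\Theta(\rho L)$ observed blocks, so the masked distances are, up to the same $\rho L$-concentration, a constant multiple of the full-row distances, and the argument goes through with the constants absorbed. Dividing the misclustering count by $n$ and plugging in $\delta^2 \asymp \rho L / n_{\max}$ and the Frobenius bound above produces the second term $\frac{C_5\beta^3 K^2}{\rho L n}\sum_l \frac{p_{\max}^{(l)}}{(\lambda_K^{(l)})^2}$; the smallness hypothesis $\frac{1}{\rho L n}\sum_l \frac{p_{\max}^{(l)}}{(\lambda_K^{(l)})^2} < (30 C_3\beta^4 K^3)^{-1}$ is precisely what is needed so that this perturbation term is below $1/2$ (or below the per-cluster size), which is required to close the $k$-means bound and to rule out degenerate solutions.

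The main obstacle I anticipate is the bookkeeping around the two coupled sources of randomness — the Bernoulli($\rho$) masks and the Bernoulli($p_{ij}^{(l)}$) edges — and in particular making the Davis–Kahan step legitimate: one must condition on the masks, verify on that event that $A_{J_l}$ is a genuine SBM on the surviving node set with well-separated singular values and the required edge density, and only then apply the concentration inequality, all while keeping a union bound over $L$ layers that still leaves probability $1 - O(n^{-1})$ (this is where $KL \leq C_0 n$ and $np_{\max}^{(l)} \geq C_2\rho^{-1}\log n$ are used to beat the $L$ factor). A secondary technical point is controlling the orthogonal alignment across layers so that a single $U$ (with a single consensus labeling) works simultaneously for all blocks — handled by choosing $O$ block-diagonal and noting that the $k$-means step is invariant to per-block rotations, which is exactly the label-switching-avoidance that motivated the algorithm.
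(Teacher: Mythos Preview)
Your per-layer Davis--Kahan bound, the Chernoff control of the observed community sizes $n_{k,J_l}$, and the identification of $\exp(-c'\rho L)$ as the contribution of under-observed nodes are all what the paper does. The gap is in the $k$-means step. You invoke the standard noisy-rows lemma with an aggregated separation $\delta^2\asymp \rho L/n_{\max}$ and then wave away the masking by asserting that masked row-distances are, on good nodes, a constant multiple of the full ones. This is precisely where the argument does not go through: for the masked objective \eqref{final:eq1}, two nodes $i,j$ assigned to the same estimated cluster share a common centroid only on the blocks indexed by $L_i\cap L_j$, and under the hypothesis $\rho L\geq 1$ (with no lower bound on $\rho^2 L$) this intersection may well be empty. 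There is then no single $\delta$ against which to run the usual triangle-inequality contradiction, and the masked-versus-full comparison you sketch cannot be made uniform over pairs.

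The paper replaces the standard lemma by a layer-wise argument that your proposal does not contain. A node $i\in\calC_k\cap\calN_u$ is declared bad ($i\in\calS_k$) if $\|U^{(l)}_{i*}O_l-\bar U^{(l)}_{i*}\|\geq \delta_k^{(l)}/2$ for \emph{every} $l\in L_i$; the count $|\calS_k|$ is then bounded via $\min_{l\in L_i}\|\cdot\|^2\leq |L_i|^{-1}\sum_{l\in L_i}\|\cdot\|^2$ together with $|L_i|\gtrsim\rho L$, which is how the $1/(\rho L)$ factor actually enters (not through an aggregated $\delta$). For good nodes the paper runs a case-split contradiction (its Step~1): if two good nodes from different communities land in the same estimated cluster and $L_i\cap L_j\neq\emptyset$, a per-layer triangle inequality in any shared layer gives the contradiction; if $L_i\cap L_j=\emptyset$, a separate probabilistic argument (Bernoulli tails plus the assumption $\rho n_{\min}\gtrsim K^2\log^2 n$) shows each estimated cluster meeting this situation has at most $O(K\rho^{-1}\log n)$ nodes from any community, and iterating over the $K$ clusters contradicts $|\calT_k|\geq n_k/30$. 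This combinatorial Step~1 is the missing idea; without it the masked $k$-means bound you need is not available.
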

}
The proof of all our theoretical results are deferred to the supplementary material.
%
\begin{remark}
The assumption $\rho L\geq 1$ is natural since $\rho L$ corresponds to the expected total number of times a node is observed, and a node needs to be observed at least once in order to be classified.
The condition $\rho n_{min}\geq C_1 K^2 \log^2 n$ ensures that $\rho$ and $n_{min}$ are not too small. If the communities are well-balanced and the parameters $\rho$ and $K$ are fixed independently of $n$, then the previous condition is satisfied for $n$ large enough.
\end{remark}
\begin{remark}
Our analysis assumes that each layer is sufficiently informative, and doesn't use the fact that there is more information contained in the whole set of layers than in individual layers. This is why the bound does not improve when $L$ increases. The obtained upper-bound is unlikely to be optimal since as shown in the experiments, the clustering performance does seem to improve a bit when $L$ increases.
\end{remark}

\section{Early fusion methods: spectral clustering on sum of adjacency matrices} \label{sec:early_fusion}
%
%
Late fusion methods rely heavily on the quality of each layer. However, by  simultaneously using all the information contained in all layers, the clustering performance can be improved in some settings (see the numerical experiments in \cite{paul2020} or \cite{airoldi2014}).
One way to do this is to aggregate the information across layers and then apply a suitable clustering method. This approach will be referred to as an early fusion method.
One simple but popular way to do this is to take the mean of the adjacency matrices (see for e.g., \cite{Bhattacharyya2018SpectralCF, paul2020}). Then, the $k$-means algorithm can be applied to the rows of the $n \times K$ eigenvector matrix associated with the top $K$ eigenvalues (in absolute value) of  $A = L^{-1} \sum_l A^{(l)}$.

\subsection{Imputing missing entries with zeros} \label{subsec:early_fusion_impute_zero}
A natural way to extend the aforementioned approach to the setting of missing nodes is to fill the missing entries with zeros, thus leading to Algorithm \ref{alg:sum_of_adj0}. \gb{The worst-case complexity of the algorithm is $O((L+K)n^2)$.}
%
%
\begin{algorithm}[!ht]
\caption{Sum of adjacency matrices with missing entries filled with zeros}\label{alg:sum_of_adj0}
\begin{flushleft}
        \textbf{Input:} The number of communities $K$, the matrices $A^{(l)}$ and $\Omega^{(l)}$.
\end{flushleft}
\begin{algorithmic}[1]
\State Compute $A=L^{-1}\sum_lA^{(l)}\odot \Omega^{(l)}$.

\State Compute the eigenvectors $u_1, \ldots, u_K$ associated with the $K$ largest eigenvalues of $A$ (ordered in absolute values) and form $U_K = [u_1 \ u_2 \ \cdots \ u_K]$.

\State Apply $K$-means on the rows of $U_K$ to obtain a partition of $\calN$ into $K$ communities.
\end{algorithmic}
\textbf{Output:} A partition of the nodes $\calN =\cup_{i=1}^K \calC_i$.
\end{algorithm}

Let us denote  $\Tilde{A}=\rho^{-2}L^{-1}\sum_l A^{(l)}\odot \Omega^{(l)}$ (clustering on $A$ or $\Tilde{A}$ is equivalent since the two matrices are proportional, but for the analysis it is more convenient to work with $\Tilde{A}$). Clearly $\expec(\Tilde{A})=L^{-1}\sum_l \expec(A^{(l)})$ (since the diagonal entries of $A^{(l)}$ are zero). Denote by $\expec(X|\Omega)$ to be the expectation of $X$ conditionally on $\Omega=(\Omega^{(1)}, \ldots, \Omega^{(L)})$ and 
let $\lambda_K$ denote the $K$th largest singular value of $\expec(\Tilde{A})$.
We have $\expec(\Tilde{A}| \Omega)=\rho^{-2}L^{-1}\sum_l \expec(A^{(l)})\odot \Omega^{(l)}$.
Using the same kind of perturbation arguments and concentration inequalities as in~\cite{lei2015}, we can relate $\tilde{A}$ to  $\expec(\Tilde{A}| \Omega)$ and then use Bernstein inequality to relate $\expec(\Tilde{A}| \Omega)$ with  $\expec(\Tilde{A})$. This leads to the following bound on the misclustering rate.
\gb{
\begin{theorem}
\label{thm:sumadj}
 Under the missing nodes MLSBM in Section \ref{subsec:missing_node_MLSBM}, there exist constants $C_0, C_1 > 0$ such that with probability at least $1-O(n^{-1})$, the solution $\hat{Z} \in \mathcal{M}_{n,K}$ obtained from Algorithm \ref{alg:sum_of_adj0} satisfies
\begin{align*}
&r(\hat{Z},Z)\leq \underbrace{\frac{C_0K}{\rho ^4\lambda_K^2}\left(\frac{np_{max}}{L}+\frac{\log n }{L}\right)}_\text{noise error}+\\
&  \underbrace{C_1K\frac{(\rho^{-2}-1)^2}{\lambda_K^2} \left((np_{max})^2\frac{\log(n)}{L} +
\left(\frac{np_{max} \log n}{L}\right)^2 \right)}_\text{missing data error}.
\end{align*}
\end{theorem}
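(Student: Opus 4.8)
The plan is to run the standard spectral‑clustering consistency argument (in the style of \cite{lei2015}) and to spend essentially all of the new effort on handling the two sources of randomness separately by conditioning on the mask $\Omega=(\Omega^{(1)},\dots,\Omega^{(L)})$. It is convenient to work with the rescaled matrix $\tilde A=\rho^{-2}L^{-1}\sum_l \tilde A^{(l)}$, on which Algorithm~\ref{alg:sum_of_adj0} produces the same clustering as on $A$. Writing $\bar\Pi=L^{-1}\sum_l\Pi^{(l)}$ and $P^\star=Z\bar\Pi Z^T$, the matrix $P^\star$ has rank $K$ with rows constant on communities, and $\expec(\tilde A)$ differs from $P^\star$ only by a diagonal matrix of operator norm at most $p_{max}$, which I will simply absorb into the perturbation (it is dominated by the stated error terms in the regime where the bound is non‑trivial, and in that regime Weyl's inequality also gives $\sigma_K(P^\star)\asymp\lambda_K$). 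The first step is then textbook: a Davis--Kahan bound gives $\min_O\norm{\hat U-UO}_F^2\le CK\norm{\tilde A-P^\star}^2/\lambda_K^2$ for the top‑$K$ eigenvector matrices $\hat U$ of $\tilde A$ and $U$ of $P^\star$, and since the $K$ distinct rows of $U$ are separated by at least $(2n_{max})^{-1/2}$, a standard rounding/$k$‑means argument (e.g.\ \cite{lei2015}) bounds the number of misclassified nodes by $Cn_{max}K\norm{\tilde A-P^\star}^2/\lambda_K^2$, so that, using $n_{max}\le n$,
\[ r(\hat Z,Z)\ \le\ \frac{CK}{\lambda_K^2}\,\norm{\tilde A-P^\star}^2 . \]
Everything then reduces to bounding $\norm{\tilde A-P^\star}\le\norm{\tilde A-\expec(\tilde A\mid\Omega)}+\norm{\expec(\tilde A\mid\Omega)-\expec(\tilde A)}+p_{max}$, the two nontrivial terms being the promised ``noise'' and ``missing‑data'' errors.

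For the noise term I would condition on $\Omega$ and observe that $E:=\sum_l (A^{(l)}-\expec A^{(l)})\odot\Omega^{(l)}$ is symmetric with independent, mean‑zero entries above the diagonal, the $(i,j)$ entry being bounded by $\sum_l\Omega^{(l)}_{ij}$ with conditional variance $\sum_l\Omega^{(l)}_{ij}p^{(l)}_{ij}(1-p^{(l)}_{ij})$. Chernoff bounds on the i.i.d.\ Bernoulli$(\rho)$ variables $w^{(l)}_i$ place us, with probability $1-O(n^{-1})$, on an event where $\max_{ij}\sum_l\Omega^{(l)}_{ij}$ and $\max_i\sum_{j,l}\Omega^{(l)}_{ij}p^{(l)}_{ij}$ are within constant factors of $\rho^2 L$ and $\rho^2 Lnp_{max}$. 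On that event the sparse‑random‑matrix concentration bound of \cite{lei2015} (after the usual truncation of atypically heavy rows) gives $\norm{E}\le C(\sqrt{\rho^2 Lnp_{max}}+\log n)$ with probability $1-O(n^{-1})$, whence $\norm{\tilde A-\expec(\tilde A\mid\Omega)}=\rho^{-2}L^{-1}\norm{E}\le C\rho^{-2}\sqrt{(np_{max}+\log n)/L}$; squaring and inserting into the displayed inequality yields the noise term.

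The missing‑data term is where I expect the real work, and the obstacle is that the entries $\Omega^{(l)}_{ij}=w^{(l)}_iw^{(l)}_j$ of the mask are \emph{not} independent, so the entry‑independent concentration used above does not apply to $\expec(\tilde A\mid\Omega)-\expec(\tilde A)=\rho^{-2}L^{-1}\sum_l\expec(A^{(l)})\odot(\Omega^{(l)}-\rho^2\mathbf{1}\mathbf{1}^T)$ off the diagonal. The plan is to use the independence \emph{across layers} instead: for a fixed pair $(i,j)$ the scalar $\rho^{-2}L^{-1}\sum_l\expec(A^{(l)})_{ij}(w^{(l)}_iw^{(l)}_j-\rho^2)$ is a sum of $L$ independent terms of size $O(p_{max}/L)$, and a single summand has variance $\rho^{-4}(\expec(A^{(l)})_{ij})^2\Var(w^{(l)}_iw^{(l)}_j)\le(\rho^{-2}-1)p_{max}^2/L$ — a quantity that vanishes as $\rho\to1$, as it must since there are no missing data then. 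Bernstein's inequality together with a union bound over the $\binom{n}{2}$ pairs then controls $\norm{\expec(\tilde A\mid\Omega)-\expec(\tilde A)}_{\max}$ by a constant times $(\rho^{-2}-1)p_{max}\big(\sqrt{\log n/L}+\log n/L\big)$, after which $\norm{M}\le n\norm{M}_{\max}$ and squaring produce the missing‑data term, carrying the factor $(\rho^{-2}-1)^2$ and correctly vanishing in the complete case $\rho=1$. An alternative giving the same order is to write $w^{(l)}_iw^{(l)}_j-\rho^2=\bar w^{(l)}_i\bar w^{(l)}_j+\rho(\bar w^{(l)}_i+\bar w^{(l)}_j)$ with $\bar w^{(l)}=w^{(l)}-\rho\mathbf{1}$, which exhibits $\Omega^{(l)}-\expec\Omega^{(l)}$ as rank‑one plus rank‑two plus diagonal and lets one bound its Hadamard product with the entrywise‑$\le p_{max}$ matrix $\expec(A^{(l)})$ via $\norm{\bar w^{(l)}}_\infty$, $\norm{\bar w^{(l)}}$ and $\norm{\expec(A^{(l)})}$, then apply matrix Bernstein over the layers.

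Combining the three bounds and collecting constants finishes the proof. The delicate points I anticipate, all concentrated in the missing‑data step, are: making sure the $\rho^{-2}$ rescaling interacts with the Bernstein range term so that \emph{both} terms in the final bound carry the vanishing factor $\rho^{-2}-1$; checking that the passage from entrywise to operator norm (or, in the low‑rank route, the Hadamard‑product estimates) loses only constants; and arranging all of the estimates — the mask statistics $\max_{ij}\sum_l\Omega^{(l)}_{ij}$, the degree‑type sums, the conditional edge concentration and the unconditional mask concentration — to hold simultaneously on a single event of probability $1-O(n^{-1})$. The noise step, by contrast, is just a conditional‑on‑$\Omega$ rerun of the Lei--Rinaldo concentration analysis.
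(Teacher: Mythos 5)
Your proposal follows essentially the same route as the paper: the same triangle-inequality decomposition of $\|\tilde A-\expec(\tilde A)\|$ into a conditional-on-$\Omega$ noise term and a mask-concentration term (this is exactly the paper's Proposition \ref{prop:conc}), combined with the standard Davis--Kahan plus $k$-means rounding argument of Lei--Rinaldo to convert the operator-norm bound into a misclustering rate. The only differences are in the off-the-shelf concentration tools invoked for each piece --- the paper uses symmetrization, $\psi_2$-truncation and the Bandeira--van Handel bound for the noise term, and matrix Bernstein over the layers (with crude Frobenius-norm bounds on $\|X_l\|$ and $\|\expec(X_l^2)\|$, which amount to $n$ times an entrywise bound) for the missing-data term, where you propose the Lei--Rinaldo sparse-matrix concentration and an entrywise scalar Bernstein with a union bound, respectively --- and these give the same rates; the delicate point you flag, that the variance only naturally yields a factor $\sqrt{\rho^{-2}-1}$ and the Bernstein range term a factor $\max(1,\rho^{-2}-1)$ rather than the full $(\rho^{-2}-1)$ appearing in the statement, is real, and the paper's own variance computation passes over it as well.
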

}
If $L$ is small then the missing data error could be larger than one making the upper bound  trivial. In the best case scenario, we expect that $\lambda_K$ scales as $np_{max}$. So we need at least $C\log n$ layers to get a non trivial upper bound. In order to obtain asymptotic consistency, it is necessary that $L\gg \log n$.
However, experiments show that even when $L$ is small, Algorithm \ref{alg:sum_of_adj0} gives good results as long as the layers are dense enough and the number of missing nodes is not too large.

When $\rho=1$ and $np_{max}\geq \log n$  the upper bound becomes $O((Lnp_{max})^{-1/2})$ thus matching the bound obtained by \cite{Bhattacharyya2018SpectralCF} in a more general regime. See the supplementary material for other comparisons.

%
%
\subsection{Iteratively imputing the missing entries} \label{subsec:early_fusion_impute_iter}
When the number of missing nodes is important, filling missing entries with zero can lead to a huge bias and hence poor clustering performances. In order to reduce the bias we propose an alternative way of imputing the missing values (outlined as Algorithm \ref{alg:sum_of_adj_c}) based on the fact that each adjacency matrix is a noisy realization of a structured matrix. 

At iteration $t$, given an initial estimate $\hat{U}_K^{t} \in \mathbb{R}^{n\times K}$ of the common subspace we can estimate the membership matrix $\hat{Z}^t$ by applying $k$-means on  $\hat{U}_K^{t}$. Then, we can estimate the connectivity matrix $\hat{\Pi}^{(l),t}$ for each $l$ as
\begin{equation}\label{eq:sum_of_adj_c}
    \hat{\Pi}^{(l),t}=((\hat{Z}^t)^T\hat{Z}^t)^{-1}(\hat{Z}^t)^TA^{(l),t}\hat{Z}^t((\hat{Z}^t)^T\hat{Z}^t)^{-1}.
\end{equation}
Given $\hat{Z}^t$ and  $\hat{\Pi}^{(l),t}$ we estimate the rows and columns corresponding to missing nodes. Indeed, the connectivity profile of a node $i$ in layer $l$ is given by the $i$th row of $\hat{Z}^t\hat{\Pi}^{(l),t}(\hat{Z}^{t})^{T}$. By replacing the rows and columns of missing nodes by their estimated profiles, and leaving the value of observed nodes unchanged, we obtain the updated imputed matrix $A^{(l),t+1}$. Applying spectral clustering on $L^{-1}\sum_l A^{(l),t+1}$ then leads to an updated estimate $\hat{U}_K^{t+1}$ of the common subspace.  The procedure can be repeated using $\hat{U}_K^{t+1}$ and $A^{(l),t+1}$, thus iteratively imputing the missing values in order to obtain ``completed'' adjacency matrices that share the same $K$ rank structure across layers. \gb{In the worst case, the complexity of the algorithm run with $T$ iterations is $O((K+L)n^2 T+LKnT)$.} 
%

Similar iterative imputation methods have been studied in the context of principal component analysis, see for e.g., \cite{hetePCA, primePCA}.
\begin{algorithm}[!ht]
\caption{Sum of adjacency matrices with missing entries filled iteratively}\label{alg:sum_of_adj_c}
\begin{flushleft}
        \textbf{Input:} Number of communities $K$; $J_l$ and $A_{J_l}\in \mathbb{R}^{n\times n}$ for each $l$; initial estimate of the common subspace $\hat{U}_K^{0}\in \mathbb{R}^{n\times K}$ (with orthonormal columns)    obtained from Algorithm \ref{alg:sum_of_adj0}; number of iterations $T$.
\end{flushleft}
\begin{algorithmic}[1]
\State Initialize $t=0$ and $A^{(l), 0}=A_{J_l}$ for all $l$.
\Repeat 
\State Given $\hat{U}_K^{t}$, estimate the membership matrix $\hat{Z}^t$ and the connectivity parameters $\hat{\Pi}^{(l),t}$ for all $l$ by using \eqref{eq:sum_of_adj_c}.

\State For each $l$, replace rows (and corresponding columns) of $A^{(l)}$ corresponding to a missing node $i$ by the $i$th row of $\hat{Z}^t\hat{\Pi}^{(l),t}\hat{Z}^{t^T}$ to form  $A^{(l),t+1}$.

\State Compute the eigenvector matrix $\hat{U}_K^{t+1} = [u_1^{t+1} \ u_2^{t+1} \  \cdots \ u_K^{t+1}]$  associated with the $K$ largest (in absolute order) eigenvalues of $L^{-1}\sum_l A^{(l),t+1}$. Update $t\leftarrow t+1$.

\Until{$t\leq T$} 
\State Apply $K$-means on $\hat{U}_K^{T}$ to get a partition of $\calN$.
\end{algorithmic}
\textbf{Output:} A partition of the nodes $\calN =\cup_{i=1}^K \calC_i$.
\end{algorithm}
In our experiments, Algorithm \ref{alg:sum_of_adj_c} is seen to perform significantly better than other methods when $\rho$ decreases. While we do not currently have any statistical performance guarantee for Algorithm \ref{alg:sum_of_adj_c}, establishing this is an interesting direction for future work.
\section{Intermediate fusion methods: OLMF estimator} \label{sec:inter_fusion_olmf} 
Orthogonal linked matrix factorization (OLMF) is a clustering method for multilayer graphs that originated in the work of \cite{dhillon} in the complete data setup, and was later analysed in \cite{paul2020}. It shows good performance in various settings and outperforms spectral clustering when the multilayer network contains homophilic and heterophilic communities (see the numerical experiments in \cite{paul2020}). 
%

\subsection{The complete data setting} \label{subsec:olmf_complete}
In the complete data setting, the OLMF estimator is a solution of the following optimization problem 
\begin{equation}\label{eq:Olmf}
    (\hat{Q}, \hat{B}^{(1)}, \ldots , \hat{B}^{(L)}) \in \argmin{\substack{Q^TQ=I_k\\B^{(1)}, \ldots, B^{(L)}} }\sum_l|| A^{(l)}-QB^{(l)}Q^T||_F^2,
\end{equation} 
where $Q\in \mathbb{R}^{n\times K}$,  $B^{(l)}\in \mathbb{R}^{K\times K}$. Note that there is no constraint on the values taken by the entries of $B^{(l)}$.


A little algebra (see \cite{paul2020}) shows that the optimization problem \eqref{eq:Olmf} is equivalent to 
\begin{equation}\label{eq:Olmf2}
    \hat{Q}\in \argmax{Q^TQ=I_k}\sum_l||Q^TA^{(l)}Q||_F^2,\quad  \hat{B}^{(l)}=\hat{Q}^TA^{(l)}\hat{Q}
\end{equation}
for $l=1,\dots,L$. The OLMF estimator can be computed with a gradient descent on the Stiefel manifold (see \cite{paul2020} and supplementary material therein). The community estimation is then obtained by applying $K$-means on the rows of $\hat{Q}$. 

\subsection{Extension to the missing nodes setting} \label{subsec:olmf_missing}
We now present an extension of the OLMF estimator to the setting of missing nodes. By replacing the matrices $A^{(l)}$, $Q$ in the objective function in \eqref{eq:Olmf} with $A_{J_l} \in \mathbb{R}^{n \times n}$, $Q_{J_l} \in \mathbb{R}^{n \times K}$, we end up with the following modification for the incomplete setting

%
\begin{equation}\label{olmf:miss_def}
      (\hat{Q}, \hat{B}^{(1)}, \ldots , \hat{B}^{(L)})\in \argmin{\substack{Q^TQ=I_k\\B^{(1)}, \ldots, B^{(l)}}}\sum_l|| A_{J_l}-Q_{J_l}B^{(l)}Q_{J_l}^T||_F^2. 
\end{equation} 
In our experiments, we employ a BFGS algorithm for solving \eqref{olmf:miss_def}. \gb{The worst-case complexity of the algorithm is $O(LK(n^2+Kn))$.} Denoting the objective function in \eqref{eq:Olmf2} by $F$, its gradients are given by 
\begin{align*} 
\frac{\partial F}{\partial Q}&=-2\sum_l(A_{J_l}-Q_{J_l}B^{(l)}Q_{J_l}^T)Q_{J_l}B^{(l)}, \\
\frac{\partial F}{\partial B^{(l)}}&=-Q_{J_l}^T(A_{J_l}-Q_{J_l}B^{(l)}Q_{J_l}^T)Q_{J_l}.
\end{align*}
We relax the constraint that the gradient remains on the Stiefel manifold of $n \times k$ matrices, and  initialize the parameters using Algorithm \ref{alg:sum_of_adj0}.

The optimization problem in \eqref{eq:Olmf2} can be motivated via the missing nodes MLSBM as follows. If we replace the noisy realization $A_{J_l}$ with  $(Z\Pi^{(l)}Z^T)\odot \Omega^{(l)} $ then one can show (under some conditions) that the solution $\hat{Q}$
of \eqref{olmf:miss_def} has the same column span as the ground truth $Z \in \mathcal{M}_{n,K}$. This is shown formally in the following proposition.  
%
%
\begin{proposition} \label{prop:olmf}
Assume that $\Pi^{(l)}$ is full rank for each $l$, and that for each $l, l'$ the sets $J_l\cap J_{l'}$ intersect all communities. Then if $A_{J_l}=(Z\Pi^{(l)}Z^T)\odot \Omega^{(l)}$, it holds that the solution of \eqref{olmf:miss_def} is  given by $\hat{Q}=Z(Z^TZ)^{-1/2}$ and $\hat{B}^{(l)}=(Z^TZ)^{1/2}\Pi^{(l)}(Z^TZ)^{1/2}$ and is unique up to an orthogonal transformation. Moreover if $i$, $j$ belong to the same community, then $\hat{Q}_{i*}=\hat{Q}_{j*}$.
\end{proposition}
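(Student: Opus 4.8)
The plan is to show that the minimum of the objective in \eqref{olmf:miss_def} equals $0$, that the displayed pair attains it, and that every minimizer is then forced into the claimed form. First I would rewrite the objective transparently: with $W^{(l)}:=\diag(w^{(l)})$ one has $(Z\Pi^{(l)}Z^T)\odot\Omega^{(l)}=W^{(l)}Z\Pi^{(l)}Z^TW^{(l)}=Z_{J_l}\Pi^{(l)}Z_{J_l}^T$ where $Z_{J_l}=W^{(l)}Z$, and likewise $Q_{J_l}=W^{(l)}Q$, so the objective equals $\sum_l\|Z_{J_l}\Pi^{(l)}Z_{J_l}^T-Q_{J_l}B^{(l)}Q_{J_l}^T\|_F^2\ge 0$. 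A direct computation then checks that $\hat Q=Z(Z^TZ)^{-1/2}$, $\hat B^{(l)}=(Z^TZ)^{1/2}\Pi^{(l)}(Z^TZ)^{1/2}$ is feasible ($\hat Q^T\hat Q=I_K$) and makes each summand vanish; hence it is a global minimizer, and every minimizer $(\hat Q,(\hat B^{(l)})_l)$ satisfies $\hat Q_{J_l}\hat B^{(l)}\hat Q_{J_l}^T=Z_{J_l}\Pi^{(l)}Z_{J_l}^T$ for all $l$ (each nonnegative summand must be $0$).

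The core is a rank/range bookkeeping. Taking $l'=l$ in the hypothesis, $J_l$ meets every community, so (viewed as a $|J_l|\times K$ matrix) $Z_{J_l}$ has columns equal to the nonzero, pairwise disjointly supported indicators of $\calC_k\cap J_l$ and therefore $\rank(Z_{J_l})=K$; with $\Pi^{(l)}$ full rank this gives $\rank(Z_{J_l}\Pi^{(l)}Z_{J_l}^T)=K$. For a minimizer this forces $\hat B^{(l)}$ invertible and $\hat Q_{J_l}$ of full column rank $K$, whence $\range(\hat Q_{J_l}\hat B^{(l)}\hat Q_{J_l}^T)=\range(\hat Q_{J_l})$ and $\range(Z_{J_l}\Pi^{(l)}Z_{J_l}^T)=\range(Z_{J_l})$, so $\range(W^{(l)}\hat Q)=\range(W^{(l)}Z)$ for every $l$. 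Since $\range(W^{(l)}Z)$ is exactly the set of vectors supported on $J_l$ that are constant on each block $\calC_k\cap J_l$, every column of $\hat Q$ restricted to the rows in $J_l$ is constant on each $\calC_k\cap J_l$; equivalently $\hat Q_{i*}=\hat Q_{j*}$ whenever $i,j\in J_l$ lie in the same community.

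Next I would glue the per-layer statements. Fix a community $\calC_k$ and two members $i,j$, observed say in $J_l$ and $J_{l'}$ (we may assume every node lies in at least one $J_l$, as a node observed nowhere never enters the objective). By hypothesis $J_l\cap J_{l'}$ meets $\calC_k$; picking $m\in\calC_k\cap J_l\cap J_{l'}$ yields $\hat Q_{i*}=\hat Q_{m*}=\hat Q_{j*}$. Hence $\hat Q_{i*}$ depends only on the community of $i$, i.e.\ $\hat Q=ZC$ for some $C\in\matR^{K\times K}$, which already proves the final assertion of the proposition. The constraint $I_K=\hat Q^T\hat Q=C^T(Z^TZ)C$ then forces $C$ invertible and $O:=(Z^TZ)^{1/2}C$ orthogonal, so $\hat Q=Z(Z^TZ)^{-1/2}O$. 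Finally, since $\hat Q_{J_l}=Z_{J_l}(Z^TZ)^{-1/2}O$ has full column rank, the equation $\hat Q_{J_l}\hat B^{(l)}\hat Q_{J_l}^T=Z_{J_l}\Pi^{(l)}Z_{J_l}^T$ determines $\hat B^{(l)}=\hat Q_{J_l}^{+}Z_{J_l}\Pi^{(l)}Z_{J_l}^T(\hat Q_{J_l}^{+})^T$ uniquely; substituting $Z_{J_l}^TZ_{J_l}=\diag(n_{J_l,k})$ and simplifying gives $\hat B^{(l)}=O^T(Z^TZ)^{1/2}\Pi^{(l)}(Z^TZ)^{1/2}O$. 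Taking $O=I_K$ recovers the displayed pair and exhibits every minimizer as $Q\mapsto QO$, $B^{(l)}\mapsto O^TB^{(l)}O$.

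I expect the main obstacle to be the rank/range step together with its gluing: one must be careful with the distinction between the zero-padded $n\times n$ matrices $A_{J_l},\hat Q_{J_l}$ and their $|J_l|\times|J_l|$ compressions when chasing ranks, and it is precisely the intersection hypothesis on the $J_l\cap J_{l'}$ that upgrades the per-layer ``constant on $\calC_k\cap J_l$'' conclusions to the global ``constant on $\calC_k$''. The remaining manipulations — feasibility of the candidate, the $C=(Z^TZ)^{-1/2}O$ normalization, and the closed form for $\hat B^{(l)}$ — are routine linear algebra.
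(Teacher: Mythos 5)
Your proof is correct and follows essentially the same route as the paper's: verify the candidate attains objective value zero, use the rank of $Z_{J_l}\Pi^{(l)}Z_{J_l}^T$ to force $\range(\hat Q_{J_l})=\range(Z_{J_l})$, glue across layers via the hypothesis that each $J_l\cap J_{l'}$ meets every community, and finish with the orthonormality constraint. Your explicit remark that nodes observed in no layer would leave rows of $\hat Q$ unconstrained is a small point the paper's proof glosses over, but otherwise the two arguments coincide.
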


%
The matrix $\expec(A^{(l)})$ can be considered as a slight perturbation of $Z\Pi^{(l)}Z^T$ 
since the former has zeros on the diagonal. Thus the proposition shows that when there is no noise, the column-span of $\hat{Q}$ (the solution of \eqref{olmf:miss_def}) is the same as the ground truth partition $Z$.
\section{Numerical experiments} \label{sec:num_exps}
\subsection{Synthetic data} \label{subsec:num_syn_exps}
We now describe simulation results when the multilayer graph is generated from the missing nodes MLSBM. The normalized mutual information (NMI) criterion is used to compare the estimated community to the ground truth partition. It is an information theoretic measure of similarity taking values in $[0,1]$, with $1$ denoting a perfect match, and $0$ denoting completely independent partitions. Nodes that are not observed at least once are removed. 
The diagonal (resp. off-diagonal) entries of the connectivity matrices are generated uniformly at random over $[0.18,0.19]$ (resp. $0.7*[0.18,0.19]$). The ground truth partition is generated from a multinomial law with parameters $1/K$. While $K=3$ is fixed throughout, the parameters $n,\rho$ and $L$ are varied suitably. The average NMI is reported over $20$ Monte Carlo trials. As shorthand, we denote Alg. \ref{alg:k-pod} by \kpod, Alg. \ref{alg:sum_of_adj0} by \sumadjz, Alg. \ref{alg:sum_of_adj_c} by \sumadji, and \eqref{olmf:miss_def} by \olmfm.
 
 
Figure \ref{plot_rho} shows that \sumadjz~ gives good results unless $\rho$ is too small. Then, the performance of this method decreases quickly. This suggests that there is a threshold involving $\rho$ and the difference between intra and inter connectivity parameters. Figure \ref{plot_n} supports this claim. When $\rho$ is small, the performance of \sumadjz~doesn't improve when $n$ increases. So even if the separation between communities improves, the intra and inter connectivity parameters remain the same suggesting a link between these parameters and  $\rho$.
 
When $L$ increases (see Figs. \ref{plot_rho} and \ref{plot_L}), the performance of all methods improves. However, performance of \kpod~  improves less quickly than other methods. This is expected since contrary to other methods, \kpod~ relies more on the quality of each individual layer. 
\olmfm~ and \sumadji~ exhibit better performance than others in the challenging situation when $\rho$ is small, and perform as well as the others when $\rho \approx 1$. They  perform significantly better than \kpod, especially when $L$ is large.
  \begin{figure}[!ht]
  \center
   \includegraphics[scale=0.37]{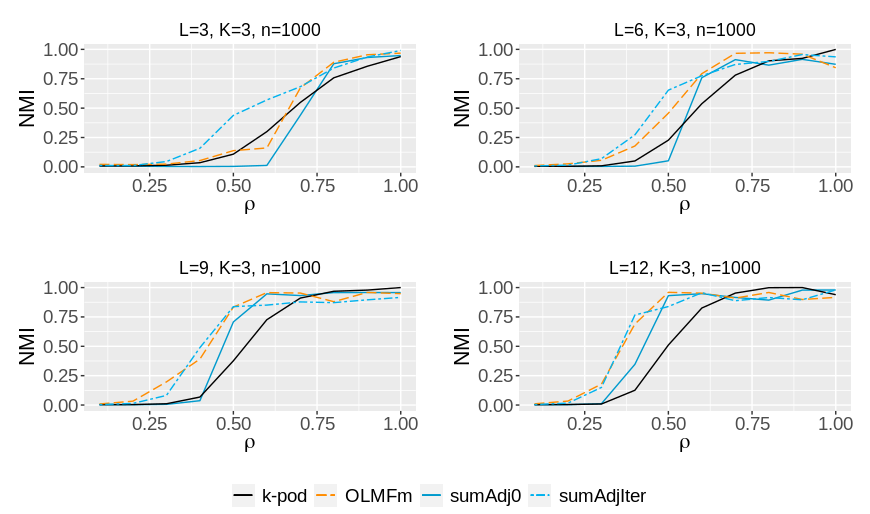}
   \caption{   NMI vs $\rho$ for different values of $L$}
   \label{plot_rho}
\end{figure}

 \begin{figure}[!ht]
 \center
   \includegraphics[scale=0.37]{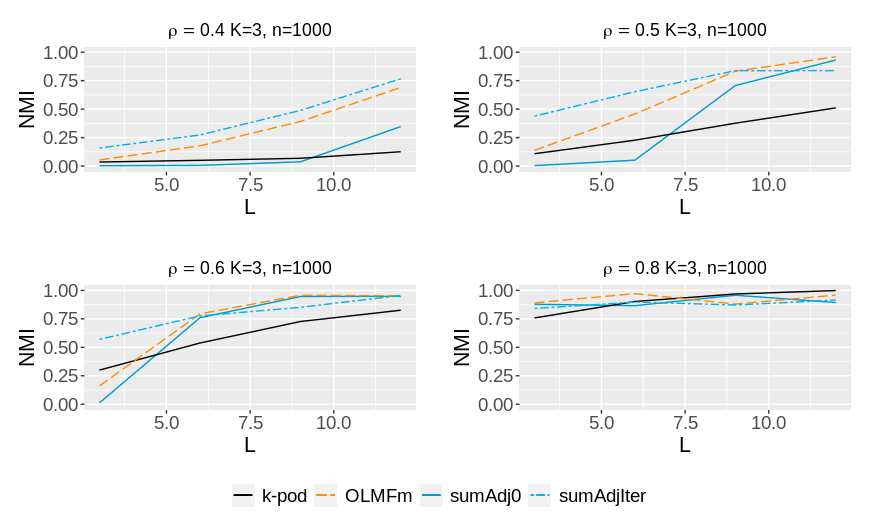}
     \caption{ NMI vs $L$ for different values of $\rho$}
     \label{plot_L} 
\end{figure}

 \begin{figure}[!ht]
 \center
   \includegraphics[scale=0.37]{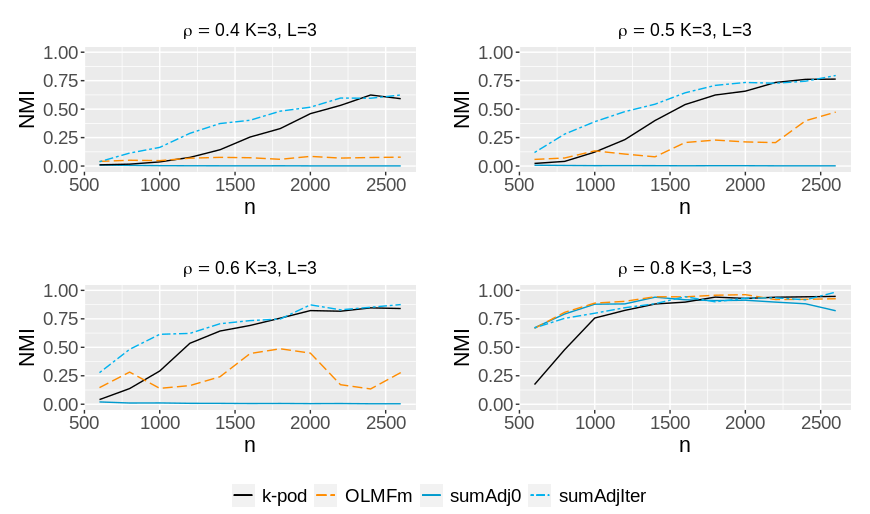}
     \caption{  NMI vs $n$ for different values of $\rho$}
     \label{plot_n}
\end{figure}

\subsection{MIT Reality Mining dataset} \label{subsec:mit_mining_exps}
This dataset records interactions (measured by cell phones activities) between $96$ students and staff at MIT in the 2004-05 school year (see \cite{mit}). We used the dataset as provided by the R package `GreedySTBM'.  As in \cite{airoldi2014} we removed the first and last layers, then discretized the time into one week intervals. The number of times two persons had an interaction during the week is not conserved in order to have a simple undirected graph corresponding to each layer. In total we obtained $32$ layers. 
For different values of $\rho$, we randomly removed nodes in each layer of the multilayer network. The average NMI over $50$ Monte Carlo trials is reported in Table \ref{table:mit} for our methods. The ground truth partition here is taken to be that obtained from \sumadjz~ when $\rho=1$. 
\begin{table}[!ht]
\centering
\begin{tabular}{rrrr}
  \hline
 $\rho$ & \sumadjz & \olmfm & \sumadji \\ 
  \hline
  1 & 1.00 & 1.00 & 1.00 \\ 
  0.9 & 0.99 & 0.96 & 0.99 \\ 
  0.8 & 0.97 & 0.86 & 0.97 \\ 
  0.7 & 0.96 & 0.93 & 0.96 \\ 
  0.6 & 0.94 & 0.79 & 0.94 \\
  0.5 & 0.89 & 0.91 & 0.90 \\
  0.4 & 0.76 & 0.73 & 0.78 \\
  0.3 & 0.56 & 0.57 & 0.62 \\
  0.2 & 0.26 & 0.41 & 0.36 \\
  0.1 & 0.09 & 0.10 & 0.11 \\ 
   \hline
\end{tabular}
\caption{NMI vs $\rho$ for MIT Reality Mining dataset}
\label{table:mit}
\end{table}
We disregarded \kpod~ because even when $\rho = 1$, its performance was disappointing and very sensitive to the initialization. This is not very surprising since this method works only if each layer is informative enough while we have a multilayer network where individual layers can be very sparse.

The performance of the other three methods studied are quite similar when $\rho$ is not too small ($\rho \geq 0.4$). However, the performance of \olmfm~ seems to be quite sensitive to initialization since for $\rho \in \set{0.6,0.8}$ its performance is worse than \sumadjz~ and \sumadji. Even if we remove half of the nodes in each layer we can still approximately recover the partition. 

\subsection{Malaria parasite genes network} \label{subsec:malaria_parasite_exps}
The dataset was constituted by \cite{malaria} to study the var genes parasite Plasmodium falciparum involved in Malaria. 
The nodes of the dataset correspond to $307$ different amino acid sequences and each of the $9$ layers corresponds to a highly variable region (HVR). Two nodes are linked in a given layer if there is a common block sequence between the corresponding amino acid sequences within the HVR associated to the layer. The analysis in \cite{malaria} and \cite{regtens} shows that the first six layers share the same community structure with $K=4$. Hence we restrict our study to the first six layers with $K=4$. We use the same procedure as before to delete nodes and to select the ground truth partition. \kpod~ was disregarded for the same reason as the previous experiment.
\begin{table}[!ht]
\centering 
\begin{tabular}{rrrrr}
  \hline
  $\rho$ & \sumadjz & \olmfm & \sumadji \\ 
  \hline
  1 & 1.00  & 0.99 & 1.00 \\ 
  0.9 & 0.75 & 0.75 & 0.72 \\ 
  0.8 & 0.63 & 0.62 & 0.58 \\ 
  0.7 & 0.47 & 0.49 & 0.47 \\ 
  0.6 & 0.32 & 0.37 & 0.34 \\ 
  0.5 & 0.22 & 0.20 & 0.26 \\ 
  0.4 & 0.13 & 0.07 & 0.16 \\
   \hline
\end{tabular}
\caption{NMI vs $\rho$ for Malaria parasite genes network}
\label{table:malaria}
\end{table}
As $\rho$ decreases, the clustering performance decreases rapidly due to a weak separation between the clusters as shown in Table \ref{table:malaria}. 

\section{Future work} \label{sec:rel_fut_work}
Our theorems require different conditions for consistency (each layer has to be informative enough for Algorithm \ref{alg:k-pod} and $L$ has to be large for Algorithm \ref{alg:sum_of_adj0}). It would be interesting to gain a better understanding of the fundamental limit of clustering with missing nodes. \gb{In this regard the use of two-round algorithms (see for e.g., \cite{AbbeSBM}) that do local refinement after having found a global partition could improve the misclustering rate.
It would also be interesting to consider model-based approaches by considering variational methods (\cite{daudin2008}) or Stochastic-EM algorithms (\cite{celeux1996stochastic}). } 

We assumed for simplicity that the nodes are missing under a Bernoulli sampling scheme, but other missing patterns could be considered. 
Another important direction would be to relax the strong condition imposed by MLSBM that all layers share the same common partition. For example, it would be more realistic to assume that the partition of networks evolving over time also evolves slowly.

\bibliography{references}

\clearpage
\appendix
\onecolumn
\begin{center}
\Large \textbf{Supplementary Material}
\end{center}
The proof of Theorem \ref{thm:kpod} is presented in Appendix \ref{app:thm1} and that of Theorem \ref{thm:sumadj} is presented in Appendix \ref{app:thm2}. Proposition \ref{prop:olmf} is proved in Appendix \ref{app:olmf} and auxiliary lemmas are gathered in Appendix \ref{app:aux_lemmas}. Appendix \ref{app:miss_edges} is devoted to discussing the missing edges setting. Existing bounds for the misclustering rate under the MLSBM in the complete setting are gathered in Appendix \ref{app:miscl_bounds}. 


\section{Proof of Theorem \ref{thm:kpod}}
\label{app:thm1}
Let $\hat{Z}$ and $\hat{C}$ be solutions of the optimization problem \eqref{final:eq1} and write $\Bar{U}:=\hat{Z}\hat{C}$. Define $U'$ as the block matrix obtained by stacking the matrices $U^{(l)}O_l$, $L_i=\lbrace l \in [L] : i \in J_l \rbrace$ be the indices of layers where the node $i$ appears, and $\calN_u=\lbrace i : |L_i|\geq \rho L/c \rbrace$ where $c>1$ is a constant that will be fixed later.
Let $\calS_k$ be the set of `bad nodes' defined as  \[ \calS_k := \lbrace i\in \calC_k\cap \calN_u : \forall l\in L_i,\, ||U_{i*}^{(l)}O_l-\bar{U}^{(l)}_{i*} ||\geq \delta_k^{(l)}/2 \rbrace \] where \[ \delta_k^{(l)}:=\min_{\substack{i\in \calC_k\\ i'\in \calC_{k'} \\ k'\neq k}}||U_{i'*}^{(l)}-U_{i*}^{(l)} ||=\min_{\substack{i\in \calC_k\\ i'\in \calC_{k'} \\ k'\neq k}}||U_{i'*}^{(l)}O_l-U_{i*}^{(l)}O_l || \]is the smallest distance between two rows of $U^{(l)}$ corresponding to different communities. Let $\calT_k :=(\calC_k\setminus \calS_k)\cap \calN_u$ be the complement of $\calS_k$ in $\calN_u \cap \calC_k$ and $\calT = \cup_k \calT_k$.

\textbf{Step 1.} First let us show by contradiction that if for all $k$, $|\calT_k|> n_k/30$ and $n_k$ satisfies the  assumptions of the theorem, then all the nodes in $\calT$ are well classified with probability at least $1-O(n^{-1})$. Assume that there exist $i\in \calT_k$ and $j\in \calT_{k'}$ such that $\bar{U}_i=\bar{U}_j$. If $L_i\cap L_j\neq \varnothing$, every $l\in L_i\cap L_j$ satisfies \[ \max(\delta_k^{(l)}, \delta_{k'}^{(l)})\leq || U_{i*}^{(l)}-U_{j*}^{(l)}||\leq || U_{i*}^{(l)}-\bar{U}_{i*}^{(l)} || +||U_{j*}^{(l)}-\bar{U}_{j*}^{(l)} ||<\frac{\delta_k^{(l)}}{2}+\frac{\delta_{k'}^{(l)}}{2}\] contradicting the fact that $i\in \calT_k$ and $j\in \calT_{k'}$. It remains to treat the case $L_i\cap L_j= \emptyset$. Let $C_1$ be a cluster induced by $\bar{U}$ containing the nodes $i$ and $j$. If there were other nodes belonging to $\calC_k$ and $\calC_{k'}$ but appearing in a common layer, the previous argument can be used to obtain a contradiction. So we can assume that all the nodes of community $\calC_k$ in $C_1$ and all nodes of community $\calC_{k'}$ in $C_1$ appear on distinct layers. We are going to show this property implies that for all $k$ the size of $\calC_k\cap C_1$, and thus the size of $C_1$, is small with high probability. Let $l_1$ be a layer where a node in $\calC_{k'}\cap C_1$ appears. The probability that none of the nodes in $\calC_k\cap C_1$ appear in $l_1$ is $(1-\rho)^{|\calC_k\cap C_1 |}$ and this probability is $O(1/n^2)$ if $|\calC_k\cap C_1 |\geq 2\rho^{-1} \log n$ (we used the fact that $-\log (1-\rho)\geq \rho$). By symmetry, the result holds for every $k$ such that $|\calC_k\cap C_1 |>0$. Therefore we can assume that $|C_1\cap \calC_k|\leq 2\rho^{-1} \log n$. 
Since for all $k$, $|\calT_k|\geq n_k/30 \geq 3K^2\rho^{-1} \log n$ by assumption, there are nodes in $\calT_k$ and $\calT_{k'}$ that are not in $C_1$. Hence there is another cluster $C_2$ induced by $\hat{U}$ containing nodes from two different communities. The same argument can be applied to $C_2$ and iteratively to $C_3, \ldots, C_K$. 
At the end, since the $C_k$ form a partition of the set of nodes, we obtain  \[ |\calT_{k'}|=\sum_k|C_k\cap \calT_{k'}|\leq 2K^2\rho ^{-1}\log n\] contradicting the fact that $|\calT_k|\geq 3K^2\rho^{-1} \log n$.

We are now going to show that under the  assumptions of the theorem, for all $k$, $\calT_k$ satisfies $|\calT_k|> n_k/30$ with probability at least $1-O(n^{-1})$. In order to prove this result we will first show that $|\calS_k|$ is small (Step 2) and then show that $\calN_u \cap \calC_k$ is large (Step 3).

\textbf{Step 2.} Observe that if $i\in \calS_k$ then $\forall l\in L_i$, $4(\delta_k^{(l)})^{-2}||(U^{(l)}O_l)_{i*}-\bar{U}^{(l)}_{i*} ||^2\geq 1$. So for all $k$, %
\begin{equation}
\label{th1:maj}
    |\calS_k|\delta_k^2 \leq 4\sum_{i \in \calC_k\cap \calN_u} \min_{l \in L_i}||(U^{(l)}O_l)_{i*}-\bar{U}^{(l)}_{i*} ||^2 \leq 4\sum_{i \in \calC_k\cap \calN_u} \frac{\sum_{l\in L_i}||(U^{(l)}O_l)_{i*}-\bar{U}^{(l)}_{i*} ||^2 }{|L_i|}
\end{equation}  
where we used the fact $\delta_k^{(l)}\geq \delta_k$ for the first inequality, and the fact that the minimum is always bounded by the mean for the second inequality.

By summing over $k$, and using the fact that $|L_i|\geq \rho L/c$ for $i\in \calN_u$,  we get \begin{equation}\label{eq:temp2}
    \sum_k|\calS_k|\delta_k^2 \leq \frac{4c}{\rho L}\sum_{i\in \calN_u}\sum_{l\in L_i}||(U^{(l)}O_l)_{i*}-\bar{U}^{(l)}_{i*} ||^2 \leq   \frac{C}{\rho L} ||(U'-\bar{U})\odot \Omega_U ||_F^2.
\end{equation} 

 Using triangular inequality we get \begin{equation}\label{eq:temp3}
     || (U'-\bar{U})\odot \maskkpod ||_F^2 \leq ||(U'-\hat{U})\odot \maskkpod||_F^2+|| (\hat{U}-\bar{U})\odot \maskkpod||_F^2 \leq 2 || (\hat{U}-U')\odot \maskkpod||_F^2
 \end{equation} where the second inequality follows from the fact that $U'$ is feasible for \eqref{final:eq1}, i.e., it can be written as a product of a membership matrix $Z$ and a centroid matrix $C\in \mathbb{R}^{K\times KL}$.
 
Notice that \[ ||(\hat{U}-U')\odot \maskkpod||_F^2 =\sum_l ||\hat{U}_{J_l}-U_{J_l}O_l ||_F^2.\] Let $\lambda_{K,J_l}$ be the $K$th largest singular value of $Z_{J_l}\Pi^{(l)}Z_{J_l}^T$. This last quantity depends on the missing patterns, but the concentration results established in Lemma \ref{lem:sizenk} shows that for all $l$, $n_{J_l}\leq 1.5\rho n$ with probability at least $1-O(n^{-1})$ and Lemma \ref{lem:eigval_min} applied with $Z_{J_l}$ instead of $Z$ and $n_{J_l,min}$ instead of $n_{min}$ shows that $\lambda_{K,J_l}\geq n_{J_l,min} \lambda_K^{(l)}\geq 0.5\rho n_{min}\lambda_K^{(l)}$ with probability at least $1-O(n^{-1})$. The concentration inequality used in Lemma \ref{lem:concAdj} and Lemma \ref{lem:sizenk} show that with probability at least $1-O(n^{-1})$, $||A_{J_l}-\expec(A_{J_l}) ||\leq C \sqrt{n_{J_l}p_{max}^{(l)}} \leq C \sqrt{\rho np_{max}^{(l)}} $. But  $\rho n_{min}\lambda_K^{(l)} \geq 4 C\sqrt {\rho n p_{max}^{(l)}}$ for all $l$ due to our assumptions. Moreover, since with high probability, $n_{J_l} p_{max}^{(l)} \geq c \log n$ for each $l$ (using the fact that w.h.p, $n_{J_l} \geq c' \rho n$ for each $l$, the condition in the theorem statement suffices), hence Lemma \ref{lem:concAdj} applies and we get that for  for each $l$ that with probability $1-O(n^{-2})$ \begin{equation}\label{eq:temp5}
    ||\hat{U}_{J_l}-U_{J_l}O_l ||_F^2 \leq \frac{C||A_{J_l}-\expec(A_{J_l}) ||_F^2}{\lambda_{K,J_l}^2}\leq CK\frac{n_{J_l}p_{max}^{(l)}}{\lambda_{K,J_l}^2}.
\end{equation} 
 So by Lemma \ref{lem:eigval_min} and Lemma \ref{lem:sizenk} there exists $C > 0$ such that with probability at least $1-O(Ln^{-2})$ (via union bound), we have for all $l\leq L$ that \begin{equation}\label{eq:temp4}
    \frac{n_{J_l}p_{max}^{(l)}}{\lambda_{K,J_l}^2} \leq  C\frac{np_{max}^{(l)}}{\rho (n_{min}\lambda_K^{(l)})^2}.
\end{equation}
    
Plugging equations \eqref{eq:temp2}, \eqref{eq:temp3},  \eqref{eq:temp4} and \eqref{eq:temp5} into \eqref{th1:maj} we obtain with probability at least $1-O(n^{-1})$ \[ \sum_k |\calS_k|\delta_k^2 \leq CK \sum_l \frac{np_{max}^{(l)}}{\rho^2 L(n_{min}\lambda_K^{(l)})^2}. \] 
We have $\delta_k = \min_l \delta_k^{(l)}=\min_l \sqrt{\frac{1}{n_{k,J_l}}}$ by Lemma 2.1 in \cite{lei2015}. Moreover $\min_l \sqrt{\frac{1}{n_{k,J_l}}}\geq \frac{c}{\sqrt{\rho n_k}}$ with probability at least $1-O(n^{-1})$ by Lemma \ref{lem:sizenk}  since $\rho n_k\geq C\log^2n$ by assumption. Thus we obtain \[ \sum_k |\calS_k| \leq \sum_k |\calS_k|(c^{-1}\sqrt{\rho n_k})^2(\delta_k)^2\leq CKn_{max}\sum_l \frac{np_{max}^{(l)}}{\rho L(n_{min}\lambda_K^{(l)})^2}. \]
Observe that $\frac{n_{max}}{n}\leq \frac{\beta}{K}$.
If  \[ \sum_l \frac{np_{max}^{(l)}}{\rho L(n_{min}\lambda_K^{(l)})^2} < (30C \beta^2K)^{-1},\] then  $|\calS_K|<n_k/30 $ for all $k$
By using $n_{min}\geq \frac{n}{\beta K}$ this last condition can be simplified as \[  \frac{1}{\rho Ln}\sum_l \frac{p_{max}^{(l)}}{(\lambda_K^{(l)})^2} < (30C \beta^4K^3)^{-1}.\]

\textbf{Step 3.} We are now going to show that $|\calN_u\cap \calC_k|$ is large. Let $p(\rho, L)= \prob (|L_i|<\rho L/c)$. For the choice $c=25$, we always have $p<8/10$ since $\rho L \geq 1$ by assumption. Chernoff bound  (Lemma \ref{lem:chernoff}) shows that $p(\rho, L)\leq e^{-\rho L(1-c^{-1})/3}$. If $\rho L >12\log n$ then with probability at least $1-O(n^{-2})$, $\calN_u=\calN$ and $|\calN_u^c|=0$.
Let us assume that $\rho L <12\log n$. The number of nodes in $\calN_u^c\cap \calC_k$ can be written as a sum $n_k$ independent Bernoulli variables with parameter $p=p(\rho, L)$ (we will omit the dependence on $\rho$ and $L$ in the following for notation convenience): \[ |\calN_u^c\cap \calC_k|= \sum_{i\leq n_k} b_i.\] In expectation $\expec(|\calN_u^c\cap \calC_k|)=pn_k$ and Hoeffding's bound implies that $\prob(||\calN_u^c\cap \calC_k|-pn_k|\geq t)\leq 2e^{-t^2/n_k}$ for any choice of $t>0$.  So we can take $t=C\sqrt{n_k\log n}=o(n_k)$ and obtain that with probability at least $1-O(Kn^{-2})$ for all $k$ \[ |\calN_u^c\cap \calC_k| \leq n_kp+ C\sqrt{n_k\log n}.\] 
Thus $|\calC_k\cap \calN_u| \geq n_k(1-p-\sqrt{\frac{C\log n}{n_k}})$. If $n$ is large enough, then $\sqrt{\frac{C\log n}{n_k}}<1/30$.

Since the sets $\calS_k$ have cardinalities at most $\frac{n_k}{30}$ we obtain that $|\calT_k|\geq \frac{5n_k}{30}$. 

\textbf{Conclusion.}
Steps 1,2 and 3 show that all nodes that belong to $\calT_k$ are well classified with probability at least $1-O(n^{-1})$. Hence the number of misclustered nodes is bounded by the sum of the cardinalities of $\calS_k$ plus  $\abs{\calN_u^c}$. So with probability at least $1-O(n^{-1})$ we get 
\begin{align*}
     r(\hat{Z},Z)&\leq \frac{1}{n}(|\calN_u^c|+\sum_k |\calS_k|)\\
     &\leq \frac{31}{30}p(\rho, L)+ C \beta \sum_l\frac{np_{max}^{(l)}}{\rho L(n_{min}\lambda_K^{(l)})^2}\\
     &\leq C\exp(-c'\rho L)+\frac{C\beta^3K^2}{\rho Ln}\sum_l\frac{p_{max}^{(l)}}{(\lambda_K^{(l)})^2}.
\end{align*}

\section{Proof of Theorem \ref{thm:sumadj}}
\label{app:thm2}
 In order to prove Theorem \ref{thm:sumadj}, we are going to show that $\Tilde{A}$ is close to $\expec(\Tilde{A}|\Omega)$ with high probability for every realization of $\Omega$ and that  $\expec(\Tilde{A}|\Omega)$ concentrates around $\expec(\Tilde{A})$ if $L$ is large enough. These results are summarized in the following proposition.
 \begin{proposition}
 \label{prop:conc}There exist constants $c_1$ and $c_2$ such that the following holds.
 \begin{enumerate}
     \item $\prob (||\Tilde{A}-\expec(\Tilde{A}|\Omega)||\geq c_1  \rho^{-2}\left(\sqrt{\frac{np_{max}}{L}}+\sqrt{\frac{\log n}{L}})|\Omega \right)\leq n^{-1}$;
     \item $|| \expec(\Tilde{A}|\Omega) -\expec(\Tilde{A}) || \leq c_2(\rho^{-2}-1) \left[np_{max}\left(\sqrt{\frac{\log n}{L}}+\frac{\log n}{L}\right) \right]$ with probability at least $1-o(n^{-1})$.
 \end{enumerate}
\end{proposition}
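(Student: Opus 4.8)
The two items are proved by two concentration arguments, the first conditionally on $\Omega$ and the second over the randomness of $\Omega$, as announced just before the statement. For Item~1, fix any realization of $\Omega$ and write $\Tilde{A}-\expec(\Tilde{A}\mid\Omega)=\rho^{-2}L^{-1}\sum_l G^{(l)}$ with $G^{(l)}:=(A^{(l)}-\expec A^{(l)})\odot\Omega^{(l)}$; conditionally on $\Omega$ these matrices are independent, symmetric, mean zero and have zero diagonal, so the single matrix $S:=\sum_l G^{(l)}$ has independent above-diagonal entries $S_{ij}=\sum_{l:\,\Omega^{(l)}_{ij}=1}(A^{(l)}_{ij}-p^{(l)}_{ij})$ with $\Var(S_{ij}\mid\Omega)\le Lp_{\max}$ and $|S_{ij}|\le L$. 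First I would apply Bernstein's inequality per entry together with a union bound over the $\binom n2$ entries to get, with conditional probability at least $1-\tfrac12 n^{-1}$, that every entry of $S$ is at most $C(\sqrt{Lp_{\max}\log n}+\log n)$ in modulus; on that event I would then invoke the same operator-norm estimate for symmetric matrices with independent bounded entries that drives the single-layer bound of Lemma~\ref{lem:concAdj} (the concentration machinery of \cite{lei2015}), whose inputs are the row-variance proxy $\le nLp_{\max}$ and the entrywise $\ell_\infty$ bound just obtained, to conclude $\|S\|\le C\sqrt{nLp_{\max}}$ up to the lower-order $\log$-terms. Multiplying by $\rho^{-2}/L$ then yields the claim; the point of working with $S$ rather than bounding each $\|G^{(l)}\|$ and summing is to keep the $\sqrt L$ gain coming from the independence of $A^{(1)},\dots,A^{(L)}$.

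For Item~2, set $\Delta:=\expec(\Tilde{A}\mid\Omega)-\expec(\Tilde{A})$. Since $\expec(A^{(l)})$ has zero diagonal and $\expec[w^{(l)}_iw^{(l)}_j]=\rho^2$ for $i\ne j$, one has $\Delta_{ii}=0$ and $\Delta_{ij}=L^{-1}\sum_l p^{(l)}_{ij}(\rho^{-2}w^{(l)}_iw^{(l)}_j-1)$, i.e.\ $\Delta=L^{-1}\sum_l\Delta^{(l)}$ with $\Delta^{(l)}:=\rho^{-2}\big(\diag(w^{(l)})P^{(l)}\diag(w^{(l)})-\rho^2 P^{(l)}\big)$ up to a diagonal matrix with entries bounded by $\rho^{-2}p_{\max}$, the $\Delta^{(l)}$ being independent and mean zero. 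The plan is to apply matrix Bernstein to $\sum_l\Delta^{(l)}$: for the uniform bound take $R:=2\rho^{-2}np_{\max}\ge\|\Delta^{(l)}\|$, and for the variance statistic compute, using $\Var(w_iw_j)=\rho^2(1-\rho^2)$ and $\mathrm{Cov}(w_iw_j,w_iw_{j'})=\rho^3(1-\rho)$, that $\expec[(\Delta^{(l)})^2]$ has diagonal entries $\le(\rho^{-2}-1)\sum_j(p^{(l)}_{ij})^2\le(\rho^{-2}-1)np^2_{\max}$ and off-diagonal part equal to $(\rho^{-1}-1)(P^{(l)})^2$ up to a lower-order correction, whence $\|\expec[(\Delta^{(l)})^2]\|\le C(\rho^{-2}-1)(np_{\max})^2$ and $v:=\|\sum_l\expec[(\Delta^{(l)})^2]\|\le CL(\rho^{-2}-1)(np_{\max})^2$. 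Matrix Bernstein then gives $\|\sum_l\Delta^{(l)}\|\le C(\sqrt{v\log n}+R\log n)$ with probability $1-o(n^{-1})$; dividing by $L$ produces a bound of the form $C'np_{\max}\big(\sqrt{(\rho^{-2}-1)\log n/L}+\rho^{-2}\log n/L\big)$, and comparing this with the stated expression $c_2(\rho^{-2}-1)np_{\max}(\sqrt{\log n/L}+\log n/L)$ (together with the discarded diagonal term of order $\rho^{-2}p_{\max}$) finishes Item~2.

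The genuinely delicate step is the operator-norm bound inside Item~1: extracting $\sqrt{np_{\max}/L}$ rather than $\sqrt{np_{\max}\log n/L}$ needs the sharp combinatorial/$\varepsilon$-net spectral estimate rather than a bare matrix Bernstein, and its hypotheses force one to control all $\binom n2$ entries of $S$ uniformly beforehand — which requires the effective degree $n_{J_l}p^{(l)}_{\max}$ to be at least logarithmic so the per-entry Bernstein correction is dominated, the same regime already exploited in Lemma~\ref{lem:concAdj}. The remaining point of care, in Item~2, is the bookkeeping of the $(\rho^{-2}-1)$ and $(\rho^{-1}-1)$ factors through the matrix-variance computation and verifying that the $R\log n$ term and the diagonal correction stay within the claimed bound (for which it suffices that $\rho$ is bounded away from $1$ and $n$ is large).
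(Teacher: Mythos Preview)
Your approach to Item~2 is essentially identical to the paper's: both apply matrix Bernstein to the independent, centered summands $\Delta^{(l)}=\rho^{-2}\expec(A^{(l)})\odot\Omega^{(l)}-\expec(A^{(l)})$, compute the covariance $\expec[(\Delta^{(l)})^2]$ entrywise (diagonal entries carry $(\rho^{-2}-1)$, off-diagonal entries carry $(\rho^{-1}-1)$), and read off $v\lesssim L(\rho^{-2}-1)(np_{\max})^2$. The paper is slightly cruder, bounding both $\|\Delta^{(l)}\|$ and $\|\expec[(\Delta^{(l)})^2]\|$ via the Frobenius norm, which is what produces the clean $(\rho^{-2}-1)$ prefactor in the statement; your more careful variance bound and your choice $R=2\rho^{-2}np_{\max}$ give $\sqrt{\rho^{-2}-1}$ and $\rho^{-2}$ in place of $(\rho^{-2}-1)$ in the two terms, and you correctly flag that matching the statement as written needs $\rho$ bounded away from~$1$. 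This is the same mild sloppiness present in the paper's own argument.

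For Item~1 the overall architecture (truncate the entries of $S$, then apply the Bandeira--van Handel spectral bound of Lemma~\ref{bandeira}) coincides with the paper's, but your entrywise Bernstein step loses exactly the ingredient that makes the second term $\sqrt{\log n/L}$ rather than something larger. Bernstein gives $|S_{ij}|\lesssim\sqrt{Lp_{\max}\log n}+\log n$, so after Bandeira the tail contribution is $\tilde\sigma_*\sqrt{\log n}\lesssim \log^{3/2}n$ for $S$, i.e.\ $\log^{3/2}n/L$ for $\Tilde A$; this dominates $\sqrt{\log n/L}$ precisely when $L<\log^2 n$, and the proposition carries no lower bound on $L$ (nor on $np_{\max}$), so this regime is live. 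The paper closes this gap by a different truncation: it symmetrizes $E=\rho^2(\Tilde A-\expec(\Tilde A\mid\Omega))$, observes that each entry of $E^s$ is a normalized sum of $L$ independent centered Bernoullis and hence sub-Gaussian with $\psi_2$-norm $K_L\le C\calK/\sqrt L$, and then invokes the sharp bound $\calK\lesssim 1/\sqrt{|\log p_{\max}|}$ for the $\psi_2$-norm of a centered Bernoulli (the Buldygin--Moskvichova theorem quoted in the proof). In the sparse regime $np_{\max}\le\log^2 n$ this yields $\calK\lesssim 1/\sqrt{\log n}$, hence truncation level $K_L\sqrt{\log n}\lesssim 1/\sqrt L$ and tail term $K_L\log n\lesssim\sqrt{\log n/L}$ as claimed; in the dense regime the trivial bound $\calK\le C$ already suffices because then $K_L\log n\le\sqrt{np_{\max}/L}$. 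Your final remark that ``the effective degree $n_{J_l}p^{(l)}_{\max}$ being at least logarithmic'' rescues the argument is misplaced: that hypothesis belongs to Theorem~\ref{thm:kpod}, not to Proposition~\ref{prop:conc} or Theorem~\ref{thm:sumadj}, and in any case does not sharpen the Bernstein truncation. The fix is precisely the sub-Gaussian norm estimate for Bernoullis, which replaces your per-entry Bernstein step.
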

\begin{proof}

The proof of the first statement is the same as the proof of the corresponding inequality if there are no missing values. Since we reason conditionally to the missingness mechanism, the zero entries of $\Tilde{A}$ can also be considered as the realization of independent Bernoulli variables with parameter zero.

Let $E=\rho^2(\Tilde{A}-\expec(\Tilde{A}|\Omega))$ and $E'$ be an independent copy of $E$. Define $E^s=E-E'$ as the symmetrized version of $E$. Jensen's inequality implies that $||E||=||\expec(E-E'|E) ||\leq \expec(||E^s|| \ | \ E)$, so it is enough to control $||E^s||$.

The $\psi_2$ norm (see for example \cite{vershynin}, Proposition 1.2.1) of each entry of $E^s$ is bounded by $K_L:=C\sqrt{L^{-1}}\calK$ where $\calK=\max_{i,j,l}||A_{ij}^{(l)}||_{\psi_2}$ and $A_{ij}^{(l)}$ are centered Bernoulli random variables with parameters $p_{ij}^{(l)}$.  By definition of the $\psi_2$ norm there exists a constant $c_0$ such that for each $i,j\leq n$ \[ \prob(|E^s_{ij}|\geq c_0K_L\sqrt{\log n})\leq n^{-4}.\]
 Define $T_{ij}=E^s_{ij}\indic_{|E^s_{ij}|\leq c_0K_L\sqrt{\log n}}$ and let $T=(T_{ij})\in \matR^{n\times n}$. By a  union bound argument the matrix $E^s-T$ has entries that are not zero with probability at most $n^{-2}$, thus $||E^s||=||T||$ with probability at least $1-O(n^{-2})$. Since the entries of $E^s$ are symmetric, the matrix $T$ is centered and has entries bounded by  $c_0K_L\sqrt{\log n}$ by construction. So we can apply the bound from Lemma \ref{bandeira} to $T$ and obtain \[ ||T|| \leq C \sqrt{\frac{np_{max}}{L}}+ K_L\log n\] with probability at least $1-O(n^{-1})$. 
 We can use the following theorem to get a sharp bound for $K_L$.
 
 \begin{theorem}[{\cite[Theorem 2.1, Lemma 2.1 (K6)]{buldygin2013}}]
 Let $Y$ be a centered Bernoulli random variable with parameter $p$, i.e., $Y = 1-p$ with probability $p$, and $Y = -p$ with probability $1-p$. Then, 
 \begin{equation*}
     \norm{Y}_{\psi_2}^2 = 
     \left\{
\begin{array}{rl}
0 \ ; & p \in \set{0,1}, \\
1/4 \ ; & p = 1/2, \\
\frac{1-2p}{2 \log(\frac{1-p}{p})} \ ; & p \in (0,1) \setminus \set{\frac{1}{2}}.
\end{array} \right.
 \end{equation*}
 In particular, it holds that $\norm{Y}_{\psi_2} \leq \frac{1}{\sqrt{2 \abs{\log (\min\set{2p,2(1-p)})}}}$.
 \end{theorem}
 
  If $np_{max}\leq \log^2 n$, then $K_L\leq C(L\log n)^{-1/2}$ and we obtain the first part of the proposition by dividing by $\rho^2$. If $np_{max}\geq \log^2 n$ then we can bound use the trivial bound $K_L\leq C L^{-1/2}$ to see that $K_L\log n\leq C\sqrt{\frac{np_{max}}{L}}$. 
  Hence 
  \[ ||\Tilde{A}-\expec(\Tilde{A}|\Omega)||\leq C  \rho^{-2} \left(\sqrt{\frac{np_{max}}{L}}+\sqrt{\frac{\log n}{L}} \right)\] with probability at least $1-O(1/n)$ for all $\Omega$.

It remains to bound the difference between $\expec(\Tilde{A}|\Omega)$ and $\expec(\Tilde{A})$. We do so using the matrix Bernstein inequality (Lemma \ref{mat_bern}). Let $X_l:=\rho^{-2}\expec (A^{(l)})\odot \Omega^{(l)}-\expec(A^{(l)})$; clearly each $X_l$ is centered. Moreover $||X_l||\leq ||X_l ||_F\leq p_{max}n (\rho^{-2}-1)$.

For notation convenience, we will write $X$ instead of $X_l$. We have $\expec(X^2)_{ij}=\sum_{k\leq n}X_{ik}X_{jk}$ because $X$ is symmetric. Recall that $X_{ik}=a_{ik}(\rho^{-2}\omega_i\omega_k-1)$ where $a_{ik}$ corresponds to $A^{(l)}_{ik}$. A simple calculation shows that \begin{align*}
    \expec(X^2)_{ij}&=\sum_k \expec(a_{ik}a_{jk}(\rho^{-2}\omega_i\omega_k-1)(\rho^{-2}\omega_j\omega_k-1)))\\
                    &=\sum_k a_{ik}a_{jk}\expec((\rho^{-2}\omega_i\omega_k-1)(\rho^{-2}\omega_j\omega_k-1))).
\end{align*}    
If $i=j$, $\expec((\rho^{-2}\omega_i\omega_k-1)^2)=\rho^{-2}-1$ and if $i\neq j$, $\expec((\rho^{-2}\omega_i\omega_k-1)(\rho^{-2}\omega_j\omega_k-1)))= \rho^{-1}-1$. So in both cases, $|\expec(X^2)_{ij}|\leq np_{max}^2(\rho^{-2}-1)$. We can now bound $||\expec(X^2_l) ||$ by $||\expec(X_l^2)||_F\leq [np_{max}(\rho^{-2}-1))]^2$ and $\sigma^2:=||\sum_l \expec(X_l^2)||$ by $L[np_{max}(\rho^{-2}-1)]^2$.

 Therefore matrix Bernstein inequality implies that \[ ||\sum_l X_l||\leq C(\rho^{-2}-1) (np_{max}\sqrt{L\log n}+np_{max}\log n)\] with probability at least $1-O(n^{-1})$ for a constant $C$ chosen appropriately.
 \end{proof}
 
 \begin{proof}[Proof of Theorem \ref{thm:sumadj}]
 Triangle inequality gives  $||\tilde{A}-\expec(\tilde{A})||\leq ||\tilde{A}-\expec(\tilde{A}|\Omega) ||+||\expec(\tilde{A}|\Omega)-\expec(\tilde{A})||$ and we can use  Proposition \ref{prop:conc} to bound with high probability each term. So with probability at least $1-O(n^{-1})$ \[ ||\tilde{A}-\expec(\tilde{A})|| \leq \frac{C}{\rho ^2}\left(\sqrt{\frac{np_{max}}{L}}+\sqrt{\frac{\log n }{L}}\right)+C(\rho^{-2}-1) \left(np_{max}\sqrt{\frac{\log n}{L}} +
\frac{np_{max} \log n}{L} \right).\]  We can now use the relation established in \cite[Lemma 2.1]{lei2015},  and a immediate adaptation of Lemma  \ref{lem:concAdj} to conclude as in Theorem \ref{thm:kpod}. 
 \end{proof}

 \section{Missing edges}
 \label{app:miss_edges}
Assume that each edge is observed independently with probability $\rho$. So we can write $\Omega^{(l)}=(w_{ij}^{(l)})_{i,j}$ where $w_{ij}^{(l)}\overset{\text{ind.}}{\sim} \mathcal{B}(\rho)$ for $i<j$. Let us denote $\tilde{A}=(L\rho)^{-1}\sum_l A^{(l)}\odot \Omega^{(l)}$, we then have $\expec(\tilde{A}|\Omega)= L^{-1}\sum_l \expec(A^{(l)})$.

We are going to show that in this setting $ \expec(\Tilde{A}|\Omega)$ concentrates around $\expec(\Tilde{A})$.  Contrary to the missing nodes setting the entries of  $ \expec(\Tilde{A}|\Omega)$ are independent.  Hence this matrix concentrates around its expectation faster than in the case where nodes are missing. This is shown in the following proposition.

\begin{proposition}
There exists a constant $C>0$ such that with probability $1-o(1)$, \[ || \expec (\Tilde{A}|\Omega) -\expec(\Tilde{A}) || \leq  C \left(\frac{p_{max}\sqrt{n}}{\rho \sqrt{ L}}+p_{max}\sqrt{\frac{\log n}{\rho^2 }} \right). \]
\end{proposition}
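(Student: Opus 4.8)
The plan is to recognise $M := \expec(\Tilde{A}\mid\Omega) - \expec(\Tilde{A})$ as a symmetric random matrix with \emph{independent} centered entries, and then to invoke the Bandeira--van Handel type bound (Lemma~\ref{bandeira}) in exactly the way it was used for the matrix $T$ in the proof of Proposition~\ref{prop:conc}. For $i\neq j$ one has $\expec(\Tilde{A}\mid\Omega)_{ij} = (L\rho)^{-1}\sum_l p_{ij}^{(l)}w_{ij}^{(l)}$ and $\expec(\Tilde{A})_{ij} = L^{-1}\sum_l p_{ij}^{(l)}$ (both matrices have zero diagonal), so that
\[ M_{ij} = \frac{1}{L\rho}\sum_{l} p_{ij}^{(l)}\bigl(w_{ij}^{(l)}-\rho\bigr), \qquad M_{ii}=0. \]
The crucial point — and the reason the rate here is faster than in the missing-nodes setting — is that the variables $w_{ij}^{(l)}$, $i<j$, $l\le L$, are mutually independent, so the above-diagonal entries of $M$ are independent and centered; by contrast, in the missing-nodes model the entries of $\expec(\Tilde{A}\mid\Omega)$ are coupled through the shared row/column variables $w_i^{(l)}$, which is what forced the lossier matrix Bernstein argument there.

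First I would record the two parameters needed to apply Lemma~\ref{bandeira}. Since $|w_{ij}^{(l)}-\rho|\le 1$, every entry obeys $|M_{ij}| \le (L\rho)^{-1}\sum_l p_{ij}^{(l)} \le p_{max}/\rho =: b$. For the variance proxy, $\expec(w_{ij}^{(l)}-\rho)^2 = \rho(1-\rho)$, whence
\[ \tilde\sigma^2 := \max_i \sum_j \expec M_{ij}^2 \;\le\; \frac{1}{(L\rho)^2}\, n L\, p_{max}^2\, \rho(1-\rho) \;\le\; \frac{n p_{max}^2}{L\rho}. \]
Lemma~\ref{bandeira} then gives, with probability at least $1-O(n^{-1})$,
\[ \norm{M} \le C\bigl(\tilde\sigma + b\sqrt{\log n}\bigr) \le C\left( p_{max}\sqrt{\frac{n}{L\rho}} + \frac{p_{max}\sqrt{\log n}}{\rho}\right) \le C\left(\frac{p_{max}\sqrt n}{\rho\sqrt L} + p_{max}\sqrt{\frac{\log n}{\rho^2}}\right), \]
where in the last step I used $\rho\le 1$ so that $\rho^{-1/2}\le \rho^{-1}$. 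This is exactly the claimed inequality.

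There is no genuine obstacle here; the argument is a direct application of a tool already in the paper. The only points requiring a little care are: (i) verifying that the entries of $M$ are independent, so that Lemma~\ref{bandeira} applies; and (ii) tidy bookkeeping of the powers of $\rho$ — the raw bound produces $\rho^{-1/2}$ in the first term, which I simply weaken to $\rho^{-1}$ to match the statement. One could instead run matrix Bernstein (Lemma~\ref{mat_bern}), as was done for the $\expec(\Tilde A\mid\Omega)$ versus $\expec(\Tilde A)$ comparison in the missing-nodes case, but that introduces a spurious $\sqrt{\log n}$ factor on the variance term and hence yields a strictly weaker bound; Lemma~\ref{bandeira} is the right instrument precisely because the entries here are independent.
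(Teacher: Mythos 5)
Your proposal is correct and follows essentially the same route as the paper: both identify $\expec(\Tilde{A}\mid\Omega)-\expec(\Tilde{A})$ as a symmetric matrix with independent, centered, bounded entries and apply Lemma~\ref{bandeira} with the variance proxy $\tilde\sigma$ and the entrywise sup-norm bound $p_{max}/\rho$. Your bookkeeping is in fact slightly sharper (you retain the factor $\rho(1-\rho)$ in the variance, giving $\sqrt{n/(L\rho)}$ where the paper writes $\sqrt{n/(\rho^2 L)}$), but after weakening $\rho^{-1/2}$ to $\rho^{-1}$ you land on exactly the stated inequality.
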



\begin{proof}
 We have $\expec(\tilde{A}|\Omega)_{ij}-\expec(\Tilde{A})_{ij}= (\rho L)^{-1}\sum_l a_{ij}^{(l)}(w_{ij}^{(l)}-\rho)$ where $a_{ij}^{(l)}=\expec(A^{(l)})_{ij}$. Hence the matrix $\expec(\tilde{A}|\Omega)-\expec(\Tilde{A})$ is centered and has independent subgaussian entries. 
 As in the proof of Proposition \ref{prop:conc}, we can use Remark 3.13 in \cite{bandeira2016}. Observe that \[\max_i \sqrt{\sum_j \Var(\expec(\tilde{A}|\Omega)_{ij}-\expec(\tilde{A})_{ij})} \leq \sqrt{\frac{n}{\rho^2L}}p_{max}\] and $||\expec(\tilde{A}|\Omega)_{ij}-\expec(\tilde{A})_{ij} ||_{\infty}\leq \rho^{-1}p_{max}$ .Therefore \[ || \expec (\Tilde{A}|\Omega) -\expec(\Tilde{A}) ||\leq C \rho^{-1} \left(\sqrt{\frac{np_{max}^2}{L}}+p_{max}\sqrt{\log n} \right).\] 
\end{proof}

The difference $||\tilde{A}-\expec(\tilde{A}|\Omega)||$ can be bound as in Proposition \ref{prop:conc}. With probability at least $1-O(n^{-1})$ \[ ||\tilde{A}-\expec(\tilde{A}|\Omega)|| \leq C\rho ^{-1} \left(\sqrt{\frac{np_{max}}{L}}+\sqrt{\frac{\log n}{L}} \right).\] Then we can conclude as in Theorem \ref{thm:sumadj} by using Lemma \ref{lem:concAdj} that if  $\lambda_K(\expec(\tilde{A}))=cnp_{max}$ and $np_{max}\geq c'\log n$ then with probability at least $1-O(n^{-1})$ \[ r(\hat{Z},Z)\leq C\frac{||\tilde{A}-\expec(\tilde{A}|\Omega)||+|| \expec (\Tilde{A}|\Omega) -\expec(\Tilde{A}) ||}{\lambda_K(\expec(\tilde{A}))}\leq  \frac{C}{\rho \sqrt{Lnp_{max}}}\] because the error due to missing values is negligible compared to the error due to the noise when $L\leq p_{max}^{-1}$, contrary to the missing nodes setting.
\section{Proof of Proposition \ref{prop:olmf}}
\label{app:olmf}
Observe that $(Z\Pi^{(l)}Z^T)\odot \Omega =Z_{J_l}\Pi^{(l)}Z_{J_l}^T$ and $(Z(Z^TZ)^{-1/2})_{J_l}=Z_{J_l}(Z^TZ)^{-1/2}$. Hence $\hat{Q}:=Z(Z^TZ)^{-1/2}$ and $\hat{B}^{(l)}:=(Z^TZ)^{1/2}\Pi^{(l)}(Z^TZ)^{1/2}$ are solutions of the optimization problem \eqref{olmf:miss_def}. Any other solution $(\hat{Q}', \hat{B}^{'(1)}, \ldots, \hat{B}^{'(L)})$ should cancel the objective function and satisfy $(\hat{Q}')^\top \hat{Q}'=I_K$ and for all $l\leq L$, \begin{equation}\label{prop1:eq}
    Z_{J_l}\Pi^{(l)}Z_{J_l}^T=\hat{Q}_{J_l}'\hat{B}^{'(l)}(\hat{Q}_{J_l}')^T .
\end{equation} 
Since $\Pi^{(l)}$ is rank $K$ and $Z_{J_l}$ injective because by assumption $J_l$ intersects every community, the space spanned by the columns of $\hat{Q}_{J_l}'$ is equal to the space spanned by the columns of $Z_{J_l}$. So we can write for each $l$, $\hat{Q}_{J_l}'=Z_{J_l}(Z^TZ)^{-1/2}S_l$ where $S_l\in \mathbb{R}^{K\times K}$ is invertible. Fix $l$ and $l'$. For all $i\in J_l\cap J_{l'}$, \[ \hat{Q}'_{i*}=(Z_{J_l}(Z^TZ)^{-1/2}S_l)_{i*} =Z_{i*}(Z^TZ)^{-1/2}S_l =(Z_{J_{l'}}(Z^TZ)^{-1/2}S_{l'})_{i*}= Z_{i*}(Z^TZ)^{-1/2}S_{l'}.\] Since by assumption $ J_l\cap J_{l'}$ intersects every community, we get $(Z^TZ)^{-1/2}S_l=(Z^TZ)^{-1/2}S_{l'}$ and hence $S_l=S_{l'}:=S$ for all $l, l'$. Finally the condition $(\hat{Q}')^T \hat{Q}'=I_K$ implies $S^T S=I_K$ so $\hat{Q}'=\hat{Q}O$ where $O\in \mathbb{R}^{K\times K}$ is orthogonal. The matrix $\hat{B}^{'(l)}$ solution of \eqref{prop1:eq} is uniquely determined by \[ ((\hat{Q}_{J_l}')^T\hat{Q}_{J_l}')^{-1}\hat{Q}_{J_l}'^T Z_{J_l}\Pi^{(l)}Z_{J_l}^T\hat{Q}_{J_l}'((\hat{Q}_{J_l}')^T\hat{Q}_{J_l}')^{-1}.\] This last expression can be rewritten as \[ O(\hat{Q}_{J_l}^T\hat{Q}_{J_l})^{-1}\hat{Q}_{J_l}^T Z_{J_l}\Pi^{(l)}Z_{J_l}^T\hat{Q}_{J_l}(\hat{Q}_{J_l})^T\hat{Q}_{J_l})^{-1}O^T= O\hat{B}^{(l)}O^T. \] So under the assumption of Proposition \ref{prop:olmf} the solutions of \eqref{olmf:miss_def} are unique up to an orthogonal transformation and the column span of $\hat{Q}$ is the same as the column span of $Z$.


\begin{remark}
The event ``for each $l, l'$ the sets $J_l\cap J_{l'}$ intersect all communities'' occurs with probability at least $1-O(KL^2/n^2)$ by replacing $J_l$ by $J_l\cap J_{l'}$ in Lemma \ref{lem:sizenk}.
\end{remark}

%
\section{Auxiliary Lemmas}
We first recall the standard Chernoff bound for sum of independent Bernoulli random variables.
\label{app:aux_lemmas}
\begin{lemma}[Chernoff bound]
\label{lem:chernoff}
Let $X=\sum_{i\leq n} X_i$ where $X_i\overset{\text{ind.}}{\sim} \mathcal{B}(\rho)$. Then \[ \prob(X\leq (1- \delta) n\rho)\leq e^{-n\rho \delta^2/2}\] and \[ \prob(X\geq (1+ \delta) n\rho)\leq e^{-n\rho \delta^2/3}\] for all $0<\delta <1$.
\end{lemma}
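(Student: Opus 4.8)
The plan is to prove both tail bounds by the standard Chernoff exponential-moment method, treating the upper and lower tails separately. Write $\mu = \expec(X) = n\rho$, and note that by independence the moment generating function factorizes as $\expec(e^{tX}) = \prod_{i\le n}\expec(e^{tX_i}) = (1-\rho+\rho e^t)^n$ for every $t\in\matR$. Applying the elementary inequality $1+x\le e^x$ with $x=\rho(e^t-1)$ gives the clean bound $\expec(e^{tX})\le \exp(n\rho(e^t-1)) = \exp(\mu(e^t-1))$, which is the only property of $X$ I will need.

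For the upper tail I would apply Markov's inequality to $e^{tX}$ with $t>0$: for any such $t$,
\[ \prob(X\ge (1+\delta)\mu) \le e^{-t(1+\delta)\mu}\,\expec(e^{tX}) \le \exp\big(\mu(e^t-1-t(1+\delta))\big). \]
Minimizing the exponent over $t$ yields the optimal choice $t=\ln(1+\delta)$ and the canonical Chernoff factor $\prob(X\ge(1+\delta)\mu)\le \big(e^{\delta}/(1+\delta)^{1+\delta}\big)^{\mu}$. Symmetrically, for the lower tail I would apply Markov to $e^{-tX}$ with $t>0$, use the same bound evaluated at $-t$, and optimize with $t=-\ln(1-\delta)>0$ to obtain $\prob(X\le(1-\delta)\mu)\le \big(e^{-\delta}/(1-\delta)^{1-\delta}\big)^{\mu}$.

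The final step is to replace these sharp but unwieldy factors by the stated Gaussian-type exponents. Taking logarithms, it suffices to establish the two scalar inequalities
\[ (1+\delta)\ln(1+\delta)-\delta \ \ge\ \tfrac{\delta^2}{3} \qquad\text{and}\qquad \delta+(1-\delta)\ln(1-\delta)\ \ge\ \tfrac{\delta^2}{2} \]
for all $0<\delta<1$; multiplying each by $\mu=n\rho$ then gives exactly $\prob(X\ge(1+\delta)n\rho)\le e^{-n\rho\delta^2/3}$ and $\prob(X\le(1-\delta)n\rho)\le e^{-n\rho\delta^2/2}$. I expect this last step to be the only real obstacle, since each inequality reduces to showing that an auxiliary function vanishing at $\delta=0$ stays nonnegative on $(0,1)$. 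The lower-tail bound is immediate from convexity: writing $g(\delta)=\delta+(1-\delta)\ln(1-\delta)-\delta^2/2$ one has $g(0)=g'(0)=0$ and $g''(\delta)=\delta/(1-\delta)\ge0$. The upper-tail bound is slightly more delicate: for $h(\delta)=(1+\delta)\ln(1+\delta)-\delta-\delta^2/3$ one has $h(0)=0$ and $h'(\delta)=\ln(1+\delta)-2\delta/3$, and since $h'(0)=0$, $h'$ is unimodal on $[0,1]$, and $h'(1)=\ln 2-2/3>0$, it follows that $h'\ge0$ and hence $h\ge0$ throughout $(0,1)$. Everything else is routine.
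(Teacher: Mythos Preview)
Your argument is correct: the exponential-moment method with the bound $\expec(e^{tX})\le \exp(\mu(e^t-1))$, optimization in $t$, and the two scalar inequalities you verify all go through exactly as written. The paper does not actually prove this lemma but simply cites a textbook (Mitzenmacher--Upfal, \emph{Probability and Computing}, Theorem~4.5 and Corollary~4.6); your proposal is precisely the standard proof one finds there, so in effect you have supplied what the paper only references.
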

\begin{proof}
See \cite[Theorem 4.5 and Corollary 4.6]{random_algBook} .
\end{proof}

\begin{lemma}
\label{lem:sizenk}
Under the assumptions of Theorem \ref{thm:kpod}, with probability at least $1-O(KL/n^2)$, it holds for each $k=1,\ldots , K$ and $l=1,\ldots ,\leq L$ that
\[ \frac{\rho}{2}n_k \leq n_{k,J_l}\leq 2\rho n_k. \]
\end{lemma}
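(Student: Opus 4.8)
The plan is to prove a concentration bound on the number of observed nodes within each community in each layer, which amounts to controlling a sum of i.i.d.\ Bernoulli$(\rho)$ variables. Recall that $n_{k,J_l} = \abs{\{i \in \calC_k : w_i^{(l)} = 1\}} = \sum_{i \in \calC_k} w_i^{(l)}$, and by the model assumption of Section \ref{subsec:missing_node_MLSBM} the variables $w_i^{(l)}$ are i.i.d.\ $\mathcal{B}(\rho)$. Hence for fixed $k$ and $l$, $n_{k,J_l}$ is a sum of $n_k$ independent Bernoulli$(\rho)$ random variables with mean $\rho n_k$, and Lemma \ref{lem:chernoff} (Chernoff bound) applies directly.

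The key steps are as follows. First I would fix $k \in [K]$ and $l \in [L]$ and apply the lower-tail Chernoff bound from Lemma \ref{lem:chernoff} with $\delta = 1/2$ to get $\prob(n_{k,J_l} \leq \tfrac{\rho}{2} n_k) \leq e^{-\rho n_k/8}$, and similarly the upper-tail bound with $\delta = 1$ to get $\prob(n_{k,J_l} \geq 2\rho n_k) \leq e^{-\rho n_k/3}$. Second, I would use the hypothesis of Theorem \ref{thm:kpod}, namely $\rho n_{min} \geq C_1 K^2 \max(\log^2 n, \sqrt{np_{max}})$, which in particular gives $\rho n_k \geq \rho n_{min} \geq C_1 \log^2 n \geq 24 \log n$ for $n$ large enough (choosing $C_1$ appropriately). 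This ensures $e^{-\rho n_k/8} \leq n^{-3}$ and $e^{-\rho n_k/3} \leq n^{-3}$, so each bad event for a fixed pair $(k,l)$ has probability at most $2n^{-3} = O(n^{-3})$. Third, I would take a union bound over all $K$ communities and all $L$ layers: there are $KL$ such pairs, so the probability that the two-sided bound fails for some $(k,l)$ is at most $O(KL n^{-3})$. Since $KL \leq C_0 n$ by assumption, this is $O(n^{-2})$, which is certainly $O(KL/n^2)$ as claimed (and in fact somewhat stronger); stating the conclusion with probability $1 - O(KL/n^2)$ is safe.

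There is no real obstacle here; the only thing to be slightly careful about is bookkeeping the constants, specifically verifying that the assumption $\rho n_{min} \geq C_1 K^2 \log^2 n$ is strong enough to beat the $KL$ factor in the union bound with room to spare, which it is since $\log^2 n \gg \log(KL) \lesssim \log n$. One could alternatively phrase the exponent as $e^{-c \rho n_k}$ for an explicit constant $c = 1/8$ and note $\rho n_k \geq 3 \log(KL^2) $ suffices; either way the conclusion holds with the stated probability. I would present the argument compactly, invoking Lemma \ref{lem:chernoff} twice and closing with the union bound.
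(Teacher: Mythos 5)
Your proposal is correct and follows essentially the same route as the paper: identify $n_{k,J_l}$ as a sum of $n_k$ i.i.d.\ Bernoulli$(\rho)$ variables, apply the Chernoff bound of Lemma \ref{lem:chernoff} using the assumption $\rho n_{min} \geq C_1 K^2 \log^2 n$ to make each tail probability polynomially small, and finish with a union bound over the $KL$ pairs. The only cosmetic point is that Lemma \ref{lem:chernoff} is stated for $0<\delta<1$, so for the upper tail it is cleaner to take $\delta=1/2$ and note $(3/2)\rho n_k \leq 2\rho n_k$ rather than invoke $\delta=1$.
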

\begin{proof}
Recall that $n_{k,J_l}=\sum_{i\in \calC_k}\indic_{i\in J_l}$ is a sum of $n_k$ independent Bernoulli random variables with parameter $\rho$. By applying Lemma \ref{lem:chernoff} with $\delta=1/2$ we get \[ n_{k,J_l}\geq \frac{\rho}{2}n_k\] and \[n_{k,J_l} \leq 2n_k \] with probability at least $1-O(1/n^2)$, provided that $n_k\rho\geq C\log n$ for a constant $C$ large enough as assumed in Theorem \ref{thm:kpod}. The lemma follows from a  union bound.
\end{proof}

\begin{lemma}[Matrix Bernstein inequality]
\label{mat_bern}
Let $X_1, \ldots, X_n$ be a sequence of independent zero-mean random matrices of size $d_1\times d_2$. Suppose that $||X_i||\leq M$ almost surely, for all $i$. Then for all positive $t$, \[ \prob(||\sum_i X_i||\geq t)\leq (d_1+d_2)\exp\left(-\frac{t^2}{2\sigma ^2+2M/3t}\right)\] where $\sigma^2=\max( ||\sum_i \expec(X_iX_i^*)||,||\sum_i \expec(X_i^*X_i)||)$.
\end{lemma}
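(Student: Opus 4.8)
The plan is to establish this by the matrix Laplace transform (matrix moment generating function) method; since the result is standard I will only sketch the argument. First I would reduce to the Hermitian case via Hermitian dilation: set
\[ \calH(X_i) := \begin{pmatrix} 0 & X_i \\ X_i^* & 0 \end{pmatrix}, \]
which are independent, zero-mean, Hermitian of size $d := d_1 + d_2$, with $\norm{\calH(X_i)} = \norm{X_i} \le M$. Since $\calH(X_i)^2 = \diag(X_i X_i^*,\, X_i^* X_i)$, we get $\norm{\sum_i \expec \calH(X_i)^2} = \max(\norm{\sum_i \expec X_i X_i^*},\, \norm{\sum_i \expec X_i^* X_i}) = \sigma^2$; and because the spectrum of $\calH(A)$ is $\{\pm\sigma_j(A)\}\cup\{0\}$, one has $\lambda_{\max}(\sum_i \calH(X_i)) = \norm{\sum_i X_i}$. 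Thus it suffices to bound $\prob(\lambda_{\max}(S)\ge t)$ for $S = \sum_i Y_i$, a sum of independent zero-mean Hermitian $d\times d$ matrices with $\norm{Y_i}\le M$ and $\norm{\sum_i \expec Y_i^2}\le\sigma^2$.

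Next, for any $\theta>0$ the matrix Markov inequality applied to $\exp(\theta\,\cdot\,)$ gives $\prob(\lambda_{\max}(S)\ge t)\le e^{-\theta t}\,\expec\Tr\exp(\theta S)$, so the crux is to control $\expec\Tr\exp(\theta S)$. Invoking Lieb's concavity theorem (concavity of $A\mapsto\Tr\exp(H+\log A)$ on positive definite $A$) together with Jensen's inequality and the independence of the $Y_i$, one obtains the subadditivity of matrix cumulant generating functions,
\[ \expec\Tr\exp(\theta S)\le \Tr\exp\Bigl(\textstyle\sum_i \log\expec e^{\theta Y_i}\Bigr). \]
I would then bound each summand in the semidefinite order. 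Since the scalar function $f(x)=(e^x-1-x)/x^2$ is increasing, the transfer rule gives $e^{\theta Y_i}\preceq I+\theta Y_i+\psi(\theta)Y_i^2$ with $\psi(\theta):=(e^{\theta M}-1-\theta M)/M^2$, hence (using $\expec Y_i=0$, operator monotonicity of $\log$, and $\log(I+A)\preceq A$) $\log\expec e^{\theta Y_i}\preceq\psi(\theta)\expec Y_i^2$. Summing and using monotonicity of $\Tr\exp$ under $\preceq$,
\[ \Tr\exp\Bigl(\textstyle\sum_i \log\expec e^{\theta Y_i}\Bigr)\le\Tr\exp\Bigl(\psi(\theta)\textstyle\sum_i\expec Y_i^2\Bigr)\le d\,\exp(\psi(\theta)\sigma^2). \]

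Combining the above, $\prob(\lambda_{\max}(S)\ge t)\le d\exp(-\theta t+\psi(\theta)\sigma^2)$. Using $\psi(\theta)\le \frac{\theta^2/2}{1-\theta M/3}$ for $0<\theta<3/M$ and choosing $\theta=t/(\sigma^2+Mt/3)$ gives, after simplification,
\[ \prob(\lambda_{\max}(S)\ge t)\le d\exp\Bigl(-\frac{t^2}{2\sigma^2+(2M/3)t}\Bigr), \]
which, together with the dilation reduction ($d=d_1+d_2$), is the claimed bound. The main obstacle is the noncommutative step: unlike the scalar Bernstein proof, where $\expec e^{\theta S}=\prod_i\expec e^{\theta Y_i}$, here $e^{\theta S}\ne\prod_i e^{\theta Y_i}$ in general, and controlling $\expec\Tr e^{\theta S}$ rests on Lieb's concavity theorem — the one genuinely deep ingredient. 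The remaining pieces (Hermitian dilation, the transfer rule for operator inequalities, operator monotonicity of $\log$, monotonicity of $\Tr\exp$, and the scalar calculus bounding $\psi$ and optimizing $\theta$) are routine.
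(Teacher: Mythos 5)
Your sketch is correct and is precisely the standard argument of the reference the paper cites for this lemma (Tropp's user-friendly tail bounds, Theorem 1.6): Hermitian dilation, the matrix Laplace transform method with Lieb's concavity theorem for subadditivity of the matrix cgf, the transfer-rule bound on the matrix mgf, and the usual choice $\theta = t/(\sigma^2 + Mt/3)$. The paper itself offers no proof beyond that citation, so there is nothing to compare against; your details (including the final optimization, which does yield the exponent $-t^2/(2\sigma^2 + 2Mt/3)$) check out.
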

\begin{proof}
See \cite[Theorem 1.6]{user_friendly}
\end{proof}

\begin{lemma}
\label{bandeira}
Let $X$ be an $n \times n$ symmetric matrix whose entries $X_{ij}$ are independent centered random variables. Then there exists for any $0<\epsilon\leq 1/2$ a universal constant $c_{\epsilon}$ such that for every $t\geq 0$
\[ \prob(||X||\geq 2(1+\epsilon)\tilde{\sigma}+t)\leq \exp\left(-\frac{t^2}{\tilde{c}_\epsilon\tilde{\sigma}_{*}} \right)\] where $\tilde{\sigma}=\max_i \sqrt{\sum_j \expec(X_{ij}^2)}$ and $\tilde{\sigma}_{*}=\max_{i,j}\expec||X_{ij}||_\infty$.
\end{lemma}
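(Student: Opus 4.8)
The statement is, essentially verbatim, one of the operator-norm concentration inequalities of Bandeira and van Handel for symmetric matrices with independent centered entries, so the plan is to obtain it by a direct appeal to \cite{bandeira2016} rather than to reprove it. Concretely, I would combine their expectation bound with their tail bound: for every $\epsilon\in(0,1/2]$ there is a constant $c_\epsilon$, depending only on $\epsilon$, such that $\expec\norm{X}\leq 2(1+\epsilon)\tilde\sigma + c_\epsilon\tilde\sigma_*\sqrt{\log n}$ and, for all $t\geq 0$, $\prob(\norm{X}\geq \expec\norm{X}+t)\leq n\exp(-t^2/(c_\epsilon\tilde\sigma_*^2))$, where $\tilde\sigma$ and $\tilde\sigma_*$ are exactly the quantities in the statement (with $\tilde\sigma_*$ read as $\max_{i,j}\norm{X_{ij}}_{L^\infty}$). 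Chaining these two inequalities, and then absorbing the prefactor $n$ and the additive $\sqrt{\log n}$ term into the exponent — which is legitimate because the bound is only non-vacuous once $t\gtrsim\tilde\sigma_*\sqrt{\log n}$, at which point one may simply enlarge the constant to the $\tilde c_\epsilon$ of the statement — yields the displayed inequality (using also that $\tilde\sigma_*^2\leq\tilde\sigma_*$ in the regime $\tilde\sigma_*\leq 1$ that is relevant to our applications). I would cite the precise corollary and remark of \cite{bandeira2016} that record these constants.

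If a self-contained argument were wanted, I would follow the moment method of \cite{bandeira2016}: first pass to moments via $\prob(\norm{X}\geq s)\leq s^{-2p}\expec\Tr(X^{2p})$ for even exponents $2p$; then expand $\expec\Tr(X^{2p})$ as a sum over closed walks of length $2p$ in the complete graph on $[n]$, grouped by the multiset of edge multiplicities of each walk; bound the contribution of walks that use only doubled edges by a Catalan-type count times $\tilde\sigma^{2p}$, and bound the contribution of every walk that contains an edge of multiplicity at least three by a geometric series in $\tilde\sigma_*/\tilde\sigma$ with $p$-dependent factors; and finally choose $p$ of order $\log n + (t/\tilde\sigma_*)^2$ and optimize over $p$.

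The main obstacle on the self-contained route — and the reason I would simply cite \cite{bandeira2016} — lies in the last two bounds of the previous step: obtaining a version of the inequality with \emph{some} universal constant in place of $2(1+\epsilon)$ is routine (for instance via symmetrization followed by non-commutative Khintchine, or a cruder walk count), but recovering the essentially optimal leading constant requires the delicate bookkeeping of the walk combinatorics carried out in \cite{bandeira2016}, which is precisely what makes that theorem nontrivial. Since every downstream use of this lemma in the present paper, in particular in Proposition \ref{prop:conc}, only needs the inequality up to absolute constants, the cited form is more than sufficient.
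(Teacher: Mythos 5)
Your proposal is correct and matches the paper exactly: the paper's entire proof of this lemma is a citation to \cite[Corollary 3.12 and Remark 3.13]{bandeira2016}, which is precisely your primary route. Your additional remarks on absorbing the prefactor $n$ and the $\sqrt{\log n}$ term (and on $\tilde{\sigma}_*^2$ versus $\tilde{\sigma}_*$ in the exponent) are more explicit than what the paper records, but they do not change the approach.
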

\begin{proof}
See \cite[Corollary 3.12 and Remark 3.13 ]{bandeira2016}.
\end{proof}

\begin{lemma}
\label{lem:concAdj}
 Let $A\in \mathbb{R}^{n\times n}$ be an adjacency matrix generated by a $SBM(Z,\Pi)$. Denote $\lambda_K$ to be the $K$th largest singular value of $P=Z\Pi Z^T$. If $\lambda_K > 2||A-\expec(A)||$, then with probability at least $1-O(n^{-2})$ \[ ||\hat{U}-UO||_F\leq C\sqrt{K} \frac{\sqrt{\max(np_{max},\log n)}}{\lambda_K}\] where $\hat{U}$ is the matrix formed by the first $K$ left singular vectors of $A$, $U=Z(Z^TZ)^{1/2}$ and $O$ is the orthogonal matrix that aligns $\hat{U}$ and $U$.
\end{lemma}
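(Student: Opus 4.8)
The plan is to combine a standard spectral-norm concentration bound for the centered adjacency matrix with a Davis--Kahan (Wedin) type perturbation estimate. Throughout write $P = Z\Pi Z^T$, so that $\expec(A) = P - \diag(P)$ differs from $P$ only on the diagonal, with $\norm{\diag(P)} \leq p_{max} \leq 1$. Since $\Pi$ is full rank and $Z$ has full column rank, $P$ has rank exactly $K$: its nonzero singular values are $\sigma_1(P) \geq \cdots \geq \sigma_K(P) = \lambda_K > 0$ with $\sigma_{K+1}(P) = 0$, and the span of its top-$K$ left singular vectors equals $\mathrm{col}(Z)$, which is the column span of $U$ (whose columns form an orthonormal basis of that space).

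First I would bound $\norm{A - \expec(A)}$. The matrix $A - \expec(A)$ is symmetric with independent, centered entries in $[-1,1]$ above the diagonal, zero diagonal, and $\expec[(A_{ij}-\expec A_{ij})^2] = p_{ij}(1-p_{ij}) \leq p_{max}$. Applying Lemma \ref{bandeira} with $\tilde\sigma \leq \sqrt{np_{max}}$, $\tilde\sigma_* \leq 1$, and deviation $t = c\sqrt{\log n}$ for $c$ large enough gives, with probability at least $1 - O(n^{-2})$,
\[ \norm{A - \expec(A)} \leq C_1\bigl(\sqrt{np_{max}} + \sqrt{\log n}\bigr) \leq C_1'\sqrt{\max(np_{max},\log n)}. \]
On this event $\norm{A - P} \leq \norm{A-\expec(A)} + p_{max} \leq C_2\sqrt{\max(np_{max},\log n)}$, using $\max(np_{max},\log n) \geq 1$ for $n$ large.

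Next I would invoke the $\sin\Theta$ theorem for singular subspaces. Since $A$ is symmetric, $\hat U$ collects its top-$K$ eigenvectors in absolute value, and the relevant gap between the $K$th and $(K{+}1)$th singular values of $P$ is $\lambda_K - 0 = \lambda_K$; the hypothesis $\lambda_K > 2\norm{A-\expec(A)}$ (after absorbing the $O(p_{max})$ diagonal correction into the constant, or equivalently by first passing through the top-$K$ singular subspace of $\expec(A)$, which is $O(p_{max}/\lambda_K)$-close to $\mathrm{col}(U)$) places the perturbation below half the gap, so Davis--Kahan/Wedin yields $\norm{\hat U\hat U^T - UU^T} \leq C_3\norm{A - P}/\lambda_K$. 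Combining this with the elementary inequalities
\[ \min_{O^TO=I_K}\norm{\hat U - UO}_F \;\leq\; \norm{\hat U\hat U^T - UU^T}_F \;\leq\; \sqrt{2K}\,\norm{\hat U\hat U^T - UU^T} \]
(the first using $1-\cos\theta_i \leq \sin^2\theta_i$ for the principal angles $\theta_i$; the second because the difference of the two rank-$K$ projections has rank at most $2K$) gives
\[ \min_{O^TO=I_K}\norm{\hat U - UO}_F \leq \frac{C_4\sqrt{K}\,\norm{A-P}}{\lambda_K} \leq \frac{C\sqrt{K}\,\sqrt{\max(np_{max},\log n)}}{\lambda_K}, \]
which is the claim.

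No step here is genuinely deep --- this is the standard adjacency-matrix perturbation estimate (cf.\ \cite{lei2015}). The only points needing a little care are the passage from $\expec(A)$ to $P$ and checking that the eigengap hypothesis survives it, and tracking the $\sqrt{K}$ factor produced when converting the operator-norm Davis--Kahan bound into a Frobenius-norm bound on the aligned singular-vector matrices. I would treat the concentration bound of Lemma \ref{bandeira} as the one substantive input, used here as a black box.
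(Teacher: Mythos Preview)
Your proposal is correct and follows essentially the same route as the paper: a spectral-norm concentration bound for $A-\expec(A)$ via Lemma~\ref{bandeira}, followed by a Davis--Kahan/Wedin perturbation argument and the rank-$2K$ conversion from operator to Frobenius norm. The only cosmetic difference is that the paper applies Wedin with the gap $\delta=|\lambda_K(A)-\lambda_{K+1}(\expec(A))|$ and then lower-bounds $\delta$ explicitly via Weyl's inequality (tracking $\lambda_K(A)\geq \lambda_K(\expec(A))-\|A-\expec(A)\|\geq \lambda_K-p_{max}-\|A-\expec(A)\|$), whereas you compare $A$ directly to the rank-$K$ matrix $P$ and phrase the bound through the projector $\hat U\hat U^T-UU^T$; both handle the $O(p_{max})$ diagonal correction in the same informal way by assuming it is negligible relative to $\lambda_K$.
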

%
%
\begin{proof}
By Remark 3.13 in \cite{bandeira2016} we get that $||A-\expec (A)|| \leq C \sqrt{\max(np_{max},\log n)}$ with probability at least $1-O(n^{-2})$. Moereover, since $\hat{U}$ and $UO$ are at most rank $K$ matrices we have \[ ||\hat{U}-UO||_F\leq  \sqrt{2K}||\hat{U}-UO||.\] Wedin's theorem (see \cite{wedin1972perturbation}) implies that \begin{equation}\label{lem:wedin1}
    ||\hat{U}-UO||\leq \frac{||A-\expec(A)||}{\delta}
\end{equation}  where $\delta:=| \lambda_K(A)-\lambda_{K+1}(\expec(A))|$ represents the spectral gap. By Weyl's inequality, \[|\lambda_{K+1}(\expec(A))-\lambda_{K+1}(Z\Pi Z^T)|\leq ||\expec(A)-Z\Pi Z^T ||.\]  Since $\expec(A)-Z\Pi Z^T$ is a diagonal matrix, its spectral norm is bounded by its largest coefficient that is bounded by $p_{max}$. Moreover since $\lambda_{K+1}(Z\Pi Z^T)=0$ we get $\lambda_{K+1}(\expec(A))\leq p_{max}$. The same argument can be used to show that $\lambda_{K}(\expec(A))\geq \lambda_K(Z\Pi Z^T)-p_{max}$. 

 Weyl's inequality also implies that \[ |\lambda_{K}(\expec(A))-\lambda_{K}(A)|\leq ||A-\expec (A)||. \] Thus \[\lambda_K(A)\geq \lambda_K(\expec(A))- ||A-\expec (A)||\geq \lambda_K(Z\Pi Z^T)-p_{max}-||A-\expec (A)|| \geq \frac{1}{2}\lambda_K(Z\Pi Z^T)-p_{max}.\] The last inequality follows from the assumption that $\lambda_K(Z\Pi Z^T)\geq 2||A-\expec (A)|| $. Since $p_{max}\leq \epsilon(n) \lambda_K(Z\Pi Z^T)$ where $\epsilon(n)\to 0$ when $n\to \infty$, $\lambda_K(A)\geq c \lambda_K(Z\Pi Z^T)$ and then $\delta \geq c \lambda_K(Z\Pi Z^T)$.
 Therefore the concentration bound stated at the beginning of the proof and \eqref{lem:wedin1} implies \[   ||\hat{U}-UO||_F\leq \sqrt{2K}\frac{||A-\expec(A)||}{\delta} \leq C\sqrt{K}\frac{\sqrt{\max(np_{max},\log n)}}{\lambda_K}\] with probability at least $1-O(n^{-2})$. 
\end{proof}

\begin{lemma}
\label{lem:eigval_min}
We have $\lambda_K(Z\Pi Z^T)\geq n_{min}\lambda_{K}(\Pi)$.
\end{lemma}

\begin{proof}
Let $\mu_1 \geq \mu_2 \geq \cdots \geq \mu_K$ be the $K$ \emph{non-zero} eigenvalues of $Z\Pi Z^T$. By the variational characterization of eigenvalues we have for all $k=1,\dots,K$
\begin{equation}\label{variat_char}
    \mu_k(Z\Pi Z^T)= \min_{V \subset G_{n-k+1}}\max_{\substack{x\in V\\ ||x||=1}} x^TZ\Pi Z^Tx
\end{equation} 
where $G_{n-k+1}$ denotes the set of $n-k+1$ dimensional subset of $\mathbb{R}^n$. Observe that $\ker(Z^T)=\Ima (Z)^\perp$.  An element $x\in \ker(Z^T)$ cannot be a solution because $Z\Pi Z^T$ is a rank $K$ matrix and thus $\mu_k(Z\Pi Z^T)\neq 0$, so for $k \leq K$ the optimization problem \eqref{variat_char} is equivalent to 
\begin{equation}\label{variat_char2}
    \mu_k(Z\Pi Z^T)= \min_{V \subset G_{n-k+1}}\max_{\substack{x\in V\cap \Ima (Z)\\ ||x||=1}} x^TZ\Pi Z^Tx.
\end{equation}  It implies in particular that any eigenvector of $Z\Pi Z^T$ associated with $\mu_k$ belongs to $\Ima (Z)$, so it has a block structure.  
Let $v$ be an eigenvector associated with $\mu_k(Z\Pi Z^T)$ for $1 \leq k \leq K$. Then $v=Zu$ where $u\in \mathbb{R}^{K}$. In particular $Z\Pi Z^Tv=Z w$ where $w=\Pi Z^TZu$. Thus \[\mu_k^2(Z\Pi Z^T) = ||Z\Pi Z^Tv||^2\geq n_{min}||w||^2\geq  n_{min}\lambda_K^2(\Pi)||Z^TZu||^2 \geq n_{min}^2\lambda_K^2(\Pi)||v||^2\] because the least singular value of $Z$ is $\sqrt{n_{min}}$. Clearly, this in particular implies that  $\lambda_K(Z\Pi Z^T)\geq n_{min}\lambda_{K}(\Pi)$.
\end{proof}

\section{Comparison between misclustering bound under MLSBM in the complete setting}
\label{app:miscl_bounds}
Here we compare existing bounds for the misclustering rate under the MLSBM in the complete data setting.
In order to simplify the comparison between the existing bounds, we will assume that $K$ is a constant, the communities are well balanced and $p_{max}^{(l)} \approx p_{max}$ for each $l$.
\begin{itemize}
    \item \textbf{Co-regularized spectral clustering}. 
    This algorithm was introduced by \cite{coregSC}. It is an intermediate fusion method that aims to find the best set of eigenvectors that simultaneously approximate  the set of eigenvectors associated with each individual layer. It was shown later by \cite{paul2020} that if $Lnp_{max}\geq C\log n$ and  $\Pi^{(l)}$ is full rank for all $l$, then with high probability (w.h.p) \[ r_{coreg} = O\left(\sqrt{\frac{\log n}{Lnp_{max}}}\right).\]

    \item \textbf{OLMF}. This estimator was discussed earlier in Section \ref{subsec:olmf_complete}. It was shown by \cite{paul2020} that if $np_{max}\geq C \log n$ and at least one of the matrices $\Pi^{(l)}$ is full rank then w.h.p. \[ r_{OLMF} = O\left(\frac{1}{\sqrt{np_{max}}}\max \set{1, \frac{(\log n)^{2+\epsilon}  \sqrt{\log L}}{L^{1/4}}}\right). \]
    
    \item \textbf{Sum of adjacency matrices.} It was shown by  \cite{paul2020} that if $Lnp_{max}\geq \log n$ and $\lambda_K(\sum_l \Pi^{(l)})\approx L p_{max}$ then w.h.p. \[ r_{sum} = O\left(\frac{\log n}{L np_{max}} \right).\]  \cite{Bhattacharyya2018SpectralCF} showed that if $Lnp_{max}\geq \log n$ and $\lambda_K(\sum_l \Pi^{(l)})\approx L \lambda_K(\Pi^{(1)})$ then w.h.p. \[ r_{sum}=O\left(\frac{1}{\sqrt{L np_{max}}}\right).\] The condition $Lnp_{max}\geq \log n$ is not stated in  \cite{Bhattacharyya2018SpectralCF} and is only assumed here for simplification. This last bound is better than the former in the sparse case when $Lnp_{max}\approx \log n$. But when $np_{max}\gg \log n^2$ the first bound is sharper.
    
    \item \textbf{Bias adjusted sum of the squared adjacency matrices.} 
    Sum of adjacency matrices performs badly when some layers are associative and other disassociative. Taking the sum of the square of adjacency matrices instead permits us to overcome this issue. However the diagonal entries of these squared matrices introduce bias, so they are often removed. More involved debiasing strategies have also been considered by \cite{hetePCA} and \cite{kmeans_relax}. 
    Assume $L=O(n)$. In the sparse case when $\sqrt{L}np_{max}\geq C\sqrt{\log n}$ and $np_{max}=O(1)$, \cite{lei2020tail}  showed that w.h.p. \[ r_{sq}=O\left(\frac{1}{n}+\frac{\log n}{L (np_{max})^2}\right).\]
    If $np_{max}\geq C\sqrt{\log n}$ they showed that w.h.p. \[ r_{sq}=O\left(\frac{\log n}{\sqrt{L}np_{max}}\right).\]
    This method was also analyzed by \cite{bhattacharyya2020general}. They showed that if $Lnp_{max}\geq C\log n$ then w.h.p. \[ r_{sq}=O\left(\frac{1}{(Lnp_{max})^{1/2}}\right).\]
\end{itemize}


\section{Additional experiments}

We added two alternative algorithms in our experiments. \begin{itemize}
    \item \lapla: the matrix $A=L^{-1}\sum_l A^{(l)}\odot \Omega^{(l)}$ is replaced by its normalized Laplacian $\mathcal{L}:=D^{-1/2}AD^{-1/2}$ where $D$ is a diagonal matrix such that $D_{ii}=\sum_j A_{ij}$. The experiments show that using this normalization improves the misclustering rate only in regimes where the sum of adjacency matrices gives good results.
    
    \item \aggrk: it is a generalization of the aggregate spectral kernel method introduced in \cite{paul2020}. For each layer $l$ we compute $\hat{U}^{(l)}$ as in Algorithm \ref{alg:k-pod}, compute the top $K$ singular vectors of $\sum_l \hat{U}^{(l)}(\hat{U}^{(l)})^T$ and then perform $k-$means on the rows of the matrix formed by these singular vectors. This method performs slightly better than \kpod  \ in our experiments.
\end{itemize}
%
Figures \ref{plot_rho_bis}, \ref{plot_L_bis} and \ref{plot_n_bis} correspond to simulations run for the same generative model as described in Section \ref{subsec:num_syn_exps}. Figure \ref{plot_k_unbalanced} corresponds to simulations run for three unbalanced communities generated from a multinomial law with parameters $(1/6,1/6,2/3)$. The diagonal (resp. off-diagonal) entries of the connectivity matrices are equal to $0.2$ ( resp. $0.1$). Figure \ref{plot_rho_bis} shows that when $\rho$ and $L$ are small, \sumadjz \, seems to be the best method. However, when $L$ is much larger (and $\rho$ small), \olmfm \, performs best. When the number of layers increase, algorithms based on early or intermediate fusion (\sumadjz,\sumadji , \olmfm) outperform algorithms based on final aggregation (\kpod, \aggrk) as shown in Figure \ref{plot_L_bis}. Final aggregation methods (\kpod, \aggrk) are more sensitive to the number of nodes than  other methods, see Figure \ref{plot_n_bis}.  When the community sizes are unbalanced, we need a stronger separation between community to recover the small community but the relative performance of the proposed algorithms seem to be similar as shown in Figure \ref{plot_k_unbalanced}.

 \begin{figure}[!ht]
  \center
   \includegraphics[scale=0.6]{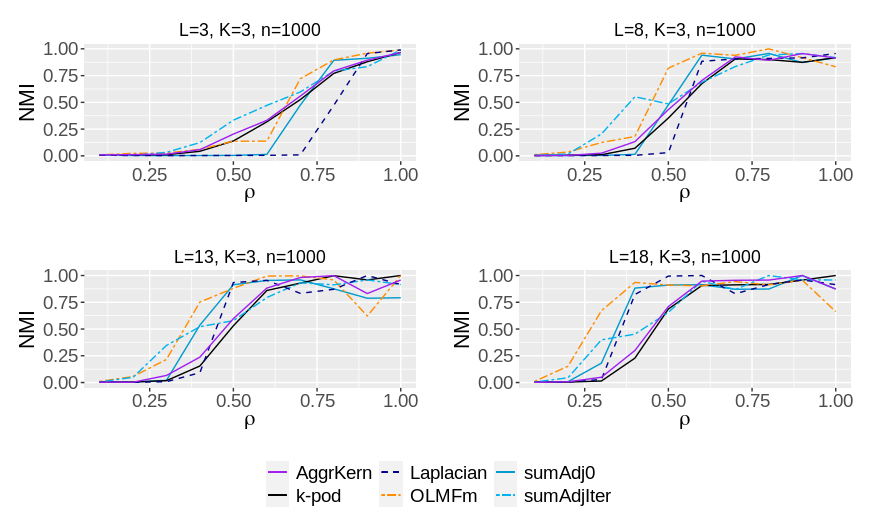}
   \caption{ NMI vs $\rho$ for different values of $L$} 
   \label{plot_rho_bis}
\end{figure}

 \begin{figure}[!ht]
 \center
   \includegraphics[scale=0.6]{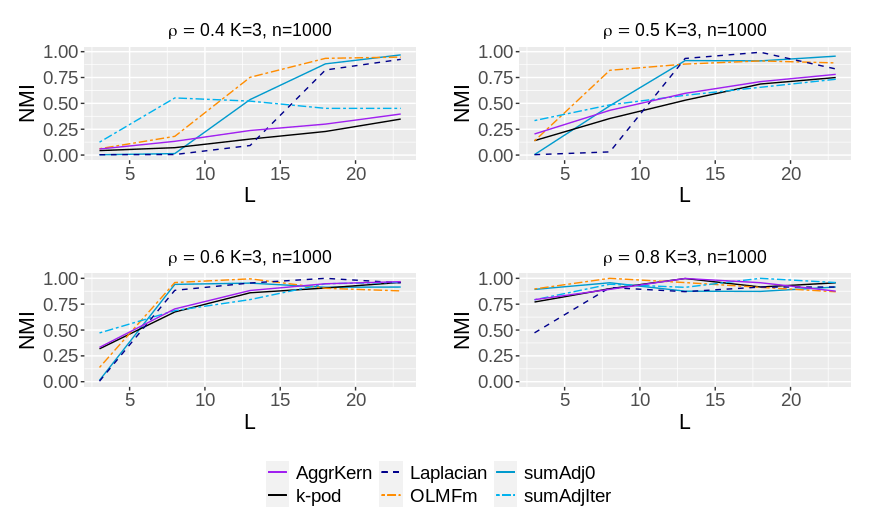}
     \caption{ NMI vs $L$ for different values of $\rho$
     }
     \label{plot_L_bis} 
\end{figure}

 \begin{figure}[!ht]
 \center
   \includegraphics[scale=0.6]{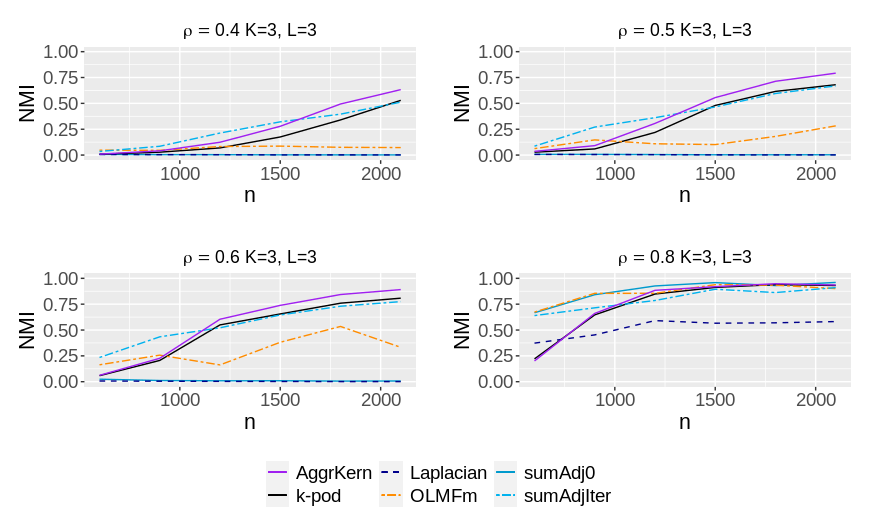}
     \caption{  NMI vs $n$ for different values of $\rho$
     }
     \label{plot_n_bis}
\end{figure}

 \begin{figure}[!ht]
 \center
   \includegraphics[scale=0.6]{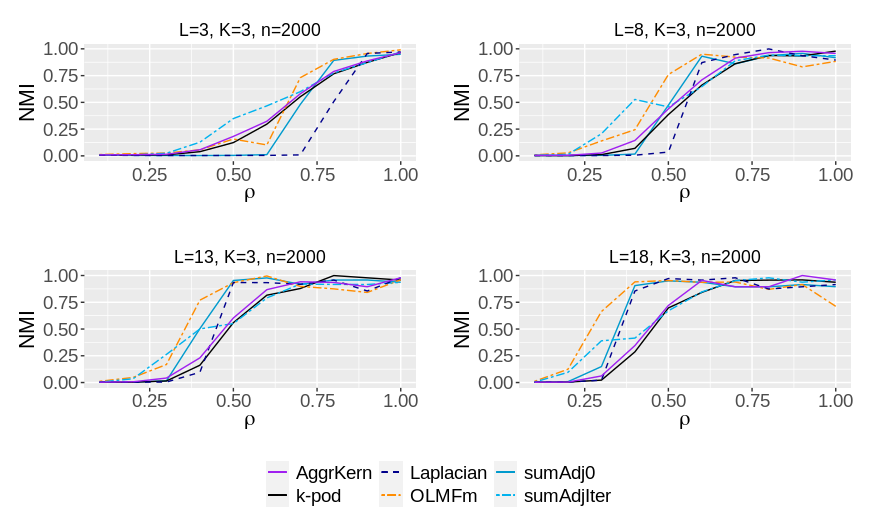}
     \caption{  NMI vs $\rho$ for different values of $L$ with unequal sized communities 
     }
     \label{plot_k_unbalanced}
\end{figure}

\end{document}